\documentclass{article}

\newif\ificmlkit
\IfFileExists{icml2026/icml2026.sty}{\icmlkittrue\usepackage[accepted]{icml2026/icml2026}}{\icmlkitfalse\usepackage{icml2026/icml2026_stub}}
\usepackage{microtype}
\usepackage{graphicx}
\usepackage{subcaption}
\usepackage{booktabs}
\usepackage{amsmath,amssymb,amsthm}
\ificmlkit\else
  \usepackage{algorithm}
  \usepackage{algorithmic}
\fi
\usepackage{placeins} 
\usepackage{hyperref}
\usepackage{url}
\usepackage{float}

\hypersetup{hidelinks}

\theoremstyle{plain}
\newtheorem{theorem}{Theorem}
\newtheorem{lemma}{Lemma}
\newtheorem{proposition}{Proposition}

\theoremstyle{definition}

\newtheorem{assumption}{Assumption}

\newcommand{\E}{\mathbb{E}}
\newcommand{\R}{\mathbb{R}}
\newcommand{\ind}{\mathbf{1}}
\DeclareMathOperator{\rank}{rank}
\DeclareMathOperator{\clip}{clip}

\icmltitlerunning{Depth over Fidelity in Fixed-Budget Noisy Evolution Strategies}

\begin{document}

\twocolumn[
\icmltitle{Depth over Fidelity in Fixed-Budget Noisy Evolution Strategies}

\begin{icmlauthorlist}
  \icmlauthor{Sichen Wang}{smbu}
  \icmlauthor{Zhipeng Lu}{smbu,gdlab}
\end{icmlauthorlist}

\icmlaffiliation{smbu}{Department of Engineering, Shenzhen MSU-BIT University, Shenzhen, China}
\icmlaffiliation{gdlab}{Guangdong Laboratory of Machine Perception and Intelligent Computing, Shenzhen, China}

\icmlcorrespondingauthor{Zhipeng Lu}{zhipeng.lu@smbu.edu.cn}

\icmlkeywords{Evolution Strategies; Black-box Optimization; Zeroth-order Optimization; Noisy Optimization; Sample Efficiency; CMA-ES}

\vskip 0.3in
]

\printAffiliationsAndNotice{}

\begin{abstract}

Noisy evolution strategies under fixed evaluation budgets face a depth--fidelity trade-off: spending evaluations to denoise intra-generation rankings reduces the number of distribution updates the optimizer can execute. We argue for depth over fidelity and propose probabilistic elite membership (PEM), which replaces hard rank-based weights in evolution strategies with conditional expected rank weights that integrate over ranking uncertainty. PEM preserves the conditional mean update while reducing conditional update dispersion---a Rao--Blackwellization of the noisy rank-based step. We instantiate PEM via residual bootstrapping (RB-PEM) with capped per-generation overhead, complemented by an adaptive probe-and-switch mechanism for low-noise regimes. Across the COCO bbob-noisy suite and external tasks including RL policy search and hyperparameter optimization, RB-PEM achieves consistent gains in high-misranking, budget-constrained settings.

\end{abstract}

\vspace{-1.0em}

\section{Introduction}
\label{sec:introduction}

Many contemporary machine learning workflows involve stochastic black-box optimization, where the objective is accessible only through noisy function evaluations. This regime arises in policy search with stochastic rollouts, simulation-based design and control, hardware-in-the-loop tuning, and hyperparameter optimization where training and validation induce intrinsic randomness.
In such problems, the dominant constraint is often a strict cap on the number of oracle queries, i.e., a fixed-budget protocol.
Consequently, evaluations spent on repeated measurement to denoise directly reduce the number of distinct candidates explored and the number of adaptive updates performed.

Among black-box optimizers, rank-based evolution strategies (ES), in particular
CMA-ES~\citep{HansenOstermeier2001CMA}, are appealing because they require only relative comparisons of candidates and
inherit invariance that make them robust across problem scalings. However, their reliance on intra-generation ranking makes them especially sensitive to evaluation noise: if the objective values of a sampled population are
perturbed, the induced permutation of ranks changes, resulting misled updates using misranked ``elites.'' In other words, noisy evaluations turn into selection noise that directly perturbs the update direction.

A large body of work on noisy evolutionary optimization advocates allocating additional intra-generation evaluations, such as uniform $k$-fold resampling and aggregation, sequential sampling / racing, and uncertainty-handling
variants of CMA-ES, to stabilize ranks before updating
\citep{Jin2005,RakshitKonarDas2017NoisyEOSurvey,BeyerSendhoff2007ToSample,Hansen2009,GrovesBranke2018SequentialSampling,BirattariEtAl2002RacingConfig}.
These approaches typically prioritize per-generation fidelity:
approximate the noiseless ordering as closely as possible before applying the standard rank-based update.

However, under a fixed-budget protocol, fidelity comes with cost: evaluating each candidate multiple times reduces the number of generations inversely. This matters because the strength of CMA-ES comes from iterated distribution learning (mean, covariance, step-size adaptation) accumulated through many generations. In particular, update is mostly sensitive to the elite threshold due to truncation selection, where the borderline candidates have nearly indistinguishable objective values and achieving reliable selection via repeated evaluations can be more costly.

This paper argues for a complementary principle for fixed-budget noisy ES:
when ranking uncertainty is high, it can be more sample-efficient to keep the per-generation evaluation cost low, thereby preserving optimization depth, and to incorporate uncertainty directly into the selection weights. Concretely, we introduce \emph{probabilistic elite membership} (PEM): rather than applying hard rank weights from
a single noisy ranking, we target the conditional expected rank weights given the sampled candidates. PEM can be viewed as a Rao--Blackwellization of the standard noisy rank-based update, which preserves the conditional mean update while reducing conditional update dispersion.

Computing PEM exactly would require extensive reevaluations per candidate, risking depth collapse. We therefore propose \emph{residual-bootstrapped} PEM (RB-PEM), a practical estimator that uses only a
\emph{capped, additive} reevaluation overhead per generation. Each generation evaluates every candidate once and uses a small targeted reevaluation set to calibrate the local noise model, then extracts standardized noise residuals and inserts them into a pooled residual bank that is reused
across generations. With this amortized residual pool, we can simulate many bootstrap rankings at essentially zero
evaluation cost~\citep{EfronTibshirani1993} and estimate expected weights accurately while keeping per-generation cost close to one evaluation per candidate. Since pooling residuals can fail under distributional mismatch, we decompose the pool-to-target mismatch into four falsifiable online-measurable components and use them as runtime diagnostics.

Depth-over-fidelity is not universally optimal: when rankings are already reliable, smoothing can weaken selection
pressure and any additional reevaluation becomes overhead. We therefore introduce a low-cost \emph{probe-and-switch}
mechanism using a small probing budget to dynamically choose between standard CMA-ES and RB-PEM.

Our main contributions are:

\vspace{-1.0em}
\begin{itemize}
\setlength{\topsep}{2pt}
  \setlength{\itemsep}{1pt}
  \setlength{\parsep}{0pt}
  \setlength{\partopsep}{0pt}
  \setlength{\leftmargin}{0.5em}
  \item We identify and formalize a \emph{depth--fidelity} tradeoff for noisy rank-based
  evolution strategies under strict evaluation budgets, clarifying when evaluation-stage uncertainty reduction can be
  sample-inefficient.
  
  \item  We introduce Probabilistic elite membership (PEM) as a principled selection-stage
  target, which can be viewed as a Rao--Blackwell variance reduction.
  
  \item We design residual-bootstrapped PEM (RB-PEM) with capped per-generation overhead and an amortized residual pool, and provide theory and runtime checks that quantify and
  diagnose residual pool mismatch.
  
  \item We propose the adaptive rule probe-and-switch to prevent negative transfer by reverting to standard
  CMA-ES in low-misranking regimes.
  
  \item Across the COCO bbob-noisy suite and diverse external tasks (RL policy search and noisy hyperparameter optimization), RB-PEM consistently improves fixed-budget performance in high-misranking regimes, substantiating the thesis that integrating uncertainty at the selection stage can be more sample-efficient than reducing it at the evaluation stage.
\end{itemize}
\vspace{-1.0em}

Section~\ref{sec:preliminaries} formalizes the fixed-budget protocol, noisy ranking, and the depth--fidelity tension.
Section~\ref{sec:method} presents PEM, residual bootstrapping, and probe-and-switch; Section~\ref{sec:theory} provides a
theoretical explanation based on conditional update dispersion; and Section~\ref{sec:experiments} evaluates the
resulting algorithms under strict budgets.


\section{Related Work}\label{sec:related_work}

\textbf{Noise handling in ES.}
Most noisy-ES methods modify how candidates are observed before applying a deterministic rank rule. Uniform resampling, racing, and sequential sampling spend extra evaluations to stabilize ranks~\citep{BeyerSendhoff2007ToSample,BrankeSchmidt2004SequentialSampling,GrovesBranke2018SequentialSampling,BirattariEtAl2002RacingConfig}; UH-CMA-ES and RA-CMA-ES adapt reevaluations during the run~\citep{Hansen2009,UchidaEtAl2024RACMAES}; PSA-CMA-ES increases population size~\citep{NishidaAkimoto2018PSACMAES}; and LRA-CMA-ES preserves generation count by attenuating CMA-ES learning rates to maintain signal-to-noise ratio~\citep{NomuraEtAl2025LRA}, which in our experiments can reduce the effective mean learning rate to about $0.04$.
Surrogate-assisted ES and DTS-CMA-ES use learned objective/posterior models, including GP surrogates and expected recombination weights~\citep{BajerHolena2019GPSurrogateCMAES,Krause2022DTSWeights}. Noisy BO similarly integrates uncertainty through global GP models and Monte-Carlo acquisitions such as LogEI/qNEI~\citep{BalandatEtAl2020BoTorch,AmentEtAl2023LogEI,Frazier2018TutorialBO}; RB-PEM instead avoids global surrogate and acts directly on the rank-based ES update.

\textbf{Ranking and selection, and our position.}
R\&S also allocates noisy samples to compare alternatives~\citep{KimNelson2007RecentAdvancesRS,HongFanLuo2021RankingSelection,HongJiangZhong2022FixedBudgetRS}, with fixed-confidence, fixed-budget, and procedure-selection variants~\citep{BrankeChickSchmidt2007SelectingSelectionProcedure,Frazier2014SequentialEliminationRS,PearceBranke2017ExpectedImprovementRS}. R\&S targets correct selection within a fixed alternative set, while CMA-ES uses populations as adaptive distribution-learning steps. RB-PEM is complementary to observation-stage noise handling: it intervenes at selection by replacing hard ranks with probabilistic weights. To our knowledge, this stage has remained unaddressed in the noisy single-objective ES literature.
For top-$\mu$ truncation, $w_i^\star=\Pr(r_i\le\mu\mid x_{1:\lambda})/\mu$, i.e., normalized elite-set membership; the conditional expected rank $\E[r_i\mid x_{1:\lambda}]$ is a different scalar and would discard the rank-weight map. For logarithmic CMA-ES weights, PEM estimates $\E[w(r_i)\mid x_{1:\lambda}]$ directly, so internal reorderings among top-ranked candidates are included.

\section{Preliminaries}
\label{sec:preliminaries}

\subsection{Optimization under Noisy Evaluations}\label{subsec:noise}
We study fixed-budget stochastic black-box optimization. The goal is to minimize an unknown objective
\begin{equation}
  \min_{x \in \R^d} f(x),
  \label{eq:bbopt_objective}
\end{equation} 
where $f:\R^d\to\R$ may be nonconvex and is accessed only through noisy function evaluations. More explicitly, each \emph{oracle} call at a point $x\in\R^d$ returns a random scalar
\begin{equation}
  y(x) \;=\; f(x) + \varepsilon(x),
  \label{eq:noisy_oracle}
\end{equation}
where $\E[\varepsilon(x)]=0$ and $\mathrm{Var}(\varepsilon(x))=\sigma^2(x)<\infty$.

We will mostly assume that repeated calls at the same point $x$ yield i.i.d.\ samples, and all oracle calls are conditionally independent given their query locations.

\subsection{Fixed-Budget Protocol}
\label{subsec:fixed_budget_protocol}
Fix a positive integer $B$ as an evaluation budget. An algorithm produces a sequence of query
locations $(x_1,\dots,x_B)$ adaptively, where each $x_t$ is a measurable function of history
$\{(x_s,y_s)\}_{s=1}^{t-1}$ and internal randomness. After querying $x_t$ it observes
$y_t \sim y(x_t)$ and consumes one unit of budget. The algorithm must terminate once the $B$-th evaluation is
consumed and a recommendation $\hat x_B$.

For population-based evolution strategies, it is convenient to group oracle calls by ``generation''.
At generation $t$ the algorithm proposes $\lambda$ candidates $(x_{t,1},\dots,x_{t,\lambda})$ and may allocate
$K_{t,i}\ge 1$ evaluations to candidate $i$, producing samples
$y_{t,i}^{(1)},\dots,y_{t,i}^{(K_{t,i})}$ with $y_{t,i}^{(j)}\sim y(x_{t,i})$ i.i.d.
The total number of oracle calls is
\begin{equation}
  \sum_{t}\sum_{i=1}^{\lambda} K_{t,i} \;\le\; B,
  \label{eq:budget_accounting}
\end{equation}
i.e., any resampling, re-evaluation, probing, or aggregation step is counted within the same budget.

The \emph{simple regret} of a returned recommendation is
\begin{equation}
  r_B \;=\; f(\hat x_B) - f^\star,\qquad
  f^\star \;=\; \inf_{x\in\R^d} f(x),
  \label{eq:simple_regret}
\end{equation}
and we can measure fixed-budget performance by the expected simple regret $\E[r_B]$ under the joint randomness of oracle noise and sampling randomization of the algorithm.


\subsection{Noisy Evaluation and Misranking}
\label{subsec:misranking}

We follow the standard CMA-ES sampling and ranking formalism specified in Appendix~\ref{app:cmaes} which
summarizes the definitions and mean-update relations~\eqref{eq:sampling_x}--\eqref{eq:mean_update_zw} used throughout. Under the noisy oracle (Section~\ref{subsec:noise}), an algorithm does not observe $f(x_{t,i})$ directly.
Instead, each evaluation returns
\begin{equation}
  y_{t,i} \;=\; f(x_{t,i}) + \varepsilon_{t,i},
  \qquad \E[\varepsilon_{t,i}\mid x_{t,i}] = 0,
  \label{eq:noisy_eval}
\end{equation}
with i.i.d.\ noise across repeated calls at the same $x_{t,i}$.
The ranking is therefore induced by the noisy values $y_{t,i}$:
let $\hat\pi_t\in S_\lambda$ satisfy
\begin{equation}
  y_{t,\hat\pi_t(1)} \le y_{t,\hat\pi_t(2)} \le \cdots \le y_{t,\hat\pi_t(\lambda)}.
  \label{eq:ranking_noisy}
\end{equation}
Accordingly, the rank-weighted step used by the algorithm becomes the random quantity
\begin{equation}
  \hat z_{w,t} \;:=\; \sum_{j=1}^{\lambda} w(j)\, z_{t,\hat\pi_t(j)},
  \label{eq:zw_hat_def}
\end{equation}
and the resulting mean update is
\begin{equation}
  m_{t+1}
  \;=\; m_t + \eta_m \sigma_t A_t \hat z_{w,t}.
  \label{eq:mean_update_noisy_rank}
\end{equation}

We call an iteration \emph{misranked} if the noisy ordering differs from the true ordering, i.e.,
$\hat\pi_t\neq \pi_t^\star$ where $\pi_t^\star$ is defined by sorting $f(x_{t,i})$.
Equivalently, a misranking occurs whenever there exists a pair $(i,j)$ such that
\begin{equation}
  f(x_{t,i}) \le f(x_{t,j}) \quad \text{but} \quad y_{t,i} > y_{t,j}.
  \label{eq:pairwise_misrank}
\end{equation}
Our analysis quantifies how such misrankings bias the rank-based update $\hat z_{w,t}$ relative to the
noiseless step $z_{w,t}$ in~\eqref{eq:zw_def}.

\subsection{Mainstream Noise Handling under Fixed Budgets: Fidelity over Depth}
\label{subsec:fidelity_over_depth}

In response to misranking in rank-based evolution strategies, common approaches replace each single noisy value $y_{t,i}$ with an \emph{aggregated estimate}
\begin{equation}
  \bar y_{t,i}:=\mathrm{Agg}\!\bigl(y_{t,i}^{(1)},\ldots,y_{t,i}^{(K_{t,i})}\bigr),
  \qquad K_{t,i}\ge 1,
  \label{eq:agg_def}
\end{equation}
and then rank candidates by $\bar y_{t,i}$ instead of $y_{t,i}$. Here $\mathrm{Agg}$ may be the sample mean/median/trimmed mean, and $K_{t,i}$ may be fixed (uniform resampling) or adaptively
chosen (uncertainty handling/sequential sampling). Let the \emph{per-generation evaluation cost} be
\begin{equation}
  C_t \;:=\; \sum_{i=1}^{\lambda} K_{t,i}.
  \label{eq:gen_cost}
\end{equation}
Under the fixed budget $B$ (Section \ref{subsec:fixed_budget_protocol}), the number of completed distribution updates (\emph{depth}) is
\begin{equation}
  T(B)
  \;:=\;
  \max\Bigl\{T\in\mathbb{N}:\ \sum_{t=0}^{T-1} C_t \le B \Bigr\}.
  \label{eq:depth_def}
\end{equation}

If every candidate is evaluated exactly $k$ times (and aggregated), then $K_{t,i}\equiv k$, hence $C_t=k\lambda$,
and the budget constraint immediately implies
\begin{equation}
  T
  \;\le\;
  \Bigl\lfloor \frac{B}{k\lambda}\Bigr\rfloor
  \;\approx\;
  \frac{B}{k\lambda}.
  \label{eq:TA_B_over_klambda}
\end{equation}
This bound is just budget accounting: each completed generation consumes $k\lambda$ oracle calls, so at most
$\lfloor B/(k\lambda)\rfloor$ generations can be executed.

Uniform resampling improves \emph{fidelity} because aggregation reduces effective noise.
For intuition, suppose $\varepsilon(x)$ is conditionally sub-Gaussian with proxy variance $\sigma^2(x)$,
and $\mathrm{Agg}$ is the sample mean over $k$ i.i.d.\ samples, so
$\bar y_{t,i} = f(x_{t,i}) + \bar\varepsilon_{t,i}$ with $\bar\varepsilon_{t,i}$ sub-Gaussian of scale
$\sigma(x_{t,i})/\sqrt{k}$. For a pair $(i,j)$ with true gap
$\Delta_{t,ij}:= f(x_{t,j})-f(x_{t,i})>0$, we have
\begin{align}
  &\Pr\left(\bar y_{t,i} > \bar y_{t,j}\mid x_{t,i},x_{t,j}\right)
  =
  \Pr\left(\bar\varepsilon_{t,i}-\bar\varepsilon_{t,j} > \Delta_{t,ij}\right) \notag\\
  \le&
  \exp\left(- \frac{k\,\Delta_{t,ij}^2}{2\bigl(\sigma^2(x_{t,i})+\sigma^2(x_{t,j})\bigr)}\right).
  \label{eq:pairwise_misrank_bound}
\end{align}
Thus, increasing $k$ can reduce pairwise misranking exponentially in $k$, \emph{but only through the squared gap
$\Delta_{t,ij}^2$}. To push the above probability below a target $\delta$, it requires that
\begin{equation}
  k
  \;=\;
  \Omega\!\left(\frac{\sigma^2(x_{t,i})+\sigma^2(x_{t,j})}{\Delta_{t,ij}^2}\,
  \log\!\frac{1}{\delta}\right).
  \label{eq:k_needed_for_delta}
\end{equation}
The bottleneck is that the most consequential comparisons for rank-$\mu$ updates are those near the truncation
boundary ($r\approx \mu$), where gaps are typically small. As the population concentrates, these gaps can
shrink and make $k$ required for a fixed fidelity target potentially large by (\ref{eq:k_needed_for_delta}). The above intuition indicates that robust aggregators can mitigate heavy tails and outliers, but they still require multiple oracle calls per candidate to reduce ranking uncertainty.

\subsection{Depth over Fidelity: A Fixed-Budget Principle}
\label{subsec:depth_over_fidelity}
Sections \ref{subsec:misranking} and \ref{subsec:fidelity_over_depth} reckon that the mainstream noise handling literature largely takes a \emph{fidelity-over-depth} stance:
spend additional intra-generation evaluations (resampling) to make the induced ordering close to the noiseless one,
then apply the usual rank-based update. The key question in fixed-budget optimization is thus: \emph{how much progress per resampling is gained by increasing fidelity?} 

Extra evaluations reduce misranking probability, but the returns can be sharply diminishing. Indeed, by equation~\eqref{eq:TA_B_over_klambda} and~\eqref{eq:k_needed_for_delta}, the comparisons that matter most for rank-$\mu$ updates are those near the elite threshold ($r\approx \mu$), where $\Delta_{t,ij}$ is typically small, since as the population concentrates these gaps shrink. Thus, the $k$ needed for a fixed fidelity target can grow, collapsing depth through $T \propto 1/k$.

This exposes a fixed-budget tension: fidelity-first schemes can spend substantial budget resolving near threshold to approximate a deterministic top-$\mu$ decision, but the resulting loss in depth removes opportunities for
iterated adaptation (mean/covariance/step-size) that require many generations to accumulate. Under strict budgets, this can be a losing trade: resolving near threshold by driving misranking probability down may require a large multiplicative increase in evaluations per candidate, which collapses the optimization trajectory length.

This motivates a paradigm shift: instead of denoise-then-rank, we aim to integrate uncertainty at the
selection stage while keeping generations cheap. Concretely, if we have a \emph{residual bootstrapping} approach that uses one base
evaluation per candidate plus a small (capped) number $K_t$ of targeted extra calls, yielding by~\eqref{eq:gen_cost}
\begin{equation}
  C_t = \lambda + K_t,
  \qquad
  K_t \le K_{\max}\ll \lambda,
  \label{eq:rb_cost_recalled}
\end{equation}
then the depth scales as
\begin{equation}
  T \approx \frac{B}{\lambda + \E[K_t]}
  \approx
  \frac{B}{\lambda}
  \quad\text{when }\E[K_t]\ll\lambda.
  \label{eq:TB_B_over_lambda_plus_EK}
\end{equation}

In practice, instead of denoising each $y_{t,i}$ until the ordering is reliable, we propagate uncertainty through
selection by replacing hard elite membership with probabilistic weights
$w_i^\star=\E[w(r_i)\mid x_{1:\lambda}]$, and we allocate extra evaluations only when they have high marginal value near the threshold, maintaining $C_t=\lambda+K_t$ with $\E[K_t]\ll\lambda$. Sections \ref{sec:method} and \ref{sec:theory} develop an implementable mechanism and show how this depth-preserving treatment of uncertainty
translates into better fixed-budget performance.

\section{Method: PEM, Residual Bootstrapping, and Probe-and-Switch}
\label{sec:method}

Following the depth-over-fidelity spirit, our method implements \emph{selection-stage uncertainty integration}: keep the baseline per-generation cost close to one evaluation per candidate, and spend a small capped number of additional evaluations to estimate how ranking noise should be integrated into the rank-weighted update. Concretely, we (i) define a principal target update via \emph{probabilistic elite membership} (PEM), (ii) estimate PEM by \emph{residual bootstrapping} with per-generation cost $C_t=\lambda+K_t$ and $K_t\le K_{\max}\ll\lambda$, and (iii) avoid unnecessary overhead in low-misranking regimes following a \emph{probe-and-switch} rule.

\subsection{Probabilistic Elite Membership (PEM)}
\label{subsec:pem}

For a generation $t$ and the sampled candidates $x_{t,1:\lambda}$, noisy values $y_{t,1:\lambda}$ induce random ranks
$r_{t,i}=\rank_i(y_{t,1:\lambda})$ \ \eqref{eq:rank_def}.
Standard rank-based CMA-ES updates use deterministic weights $w(r_{t,i})$, therefore the update direction is random even
conditional on $x_{t,1:\lambda}$ (Section~\ref{subsec:misranking}).

We define the PEM of candidate $i$ as the conditional expectation of its rank weight given
candidates:
\begin{equation}
  w_{t,i}^\star \;:=\; \E\!\left[ w(r_{t,i})\mid x_{t,1:\lambda}\right].
  \label{eq:pem_def}
\end{equation}
For the canonical top-$\mu$ truncation weights $w(r)=\frac{1}{\mu}\ind\{r\le \mu\}$, this has the probabilistic
interpretation
\[
  w_{t,i}^\star \;=\; \frac{1}{\mu}\Pr\!\left(r_{t,i}\le \mu \,\middle|\, x_{t,1:\lambda}\right),
\]
i.e., candidates near the truncation receive fractional membership proportional to their probability of being selected. More generally, with $w_k:=w(k)$, $\delta_k:=w_k-w_{k+1}$, and $p_{t,i,k}:=\Pr(r_{t,i}\le k\mid x_{t,1:\lambda})$,
\begin{equation}
  w_{t,i}^\star
= w_\lambda + \sum_{k=1}^{\lambda-1}\delta_k\,p_{t,i,k}.
  \label{eq:pem_multicutoff_main}
\end{equation}
Thus PEM is a weighted superposition of top-$k$ membership probabilities; for logarithmic weights, the largest adjacent drops occur at the very top of the ranking rather than only at the truncation boundary.

Let the one-step mean increment under a single noisy ranking be
$\Delta m_t(y):= \eta_m \sum_{i=1}^{\lambda} w(r_{t,i})(x_{t,i}-m_t)$, so that \eqref{eq:mean_update} becomes
$m_{t+1}=m_t+\Delta m_t(y)$.
PEM replaces the stochastic weights $w(r_{t,i})$ by $w_{t,i}^\star$:
\begin{equation}
  \Delta m_{\text{PEM},t} \;:=\; \eta_m \sum_{i=1}^{\lambda} w_{t,i}^\star (x_{t,i}-m_t).
  \label{eq:pem_update}
\end{equation}
This update is deterministic for given $x_{t,1:\lambda}$ and integrates over ranking uncertainty. 

\begin{lemma}
\label{lem:pem_identity}
For the standard rank-based mean increment $\Delta m_t(y)$ defined above,
\begin{equation}
  \E\!\left[\Delta m_t(y)\mid x_{t,1:\lambda}\right] \;=\; \Delta m_{\mathrm{PEM},t}.
  \label{eq:pem_lemma}
\end{equation}
\end{lemma}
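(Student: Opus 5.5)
The identity follows from linearity of conditional expectation once we note which quantities are fixed by the conditioning. The plan is to show that, given $x_{t,1:\lambda}$, the only random objects in $\Delta m_t(y)$ are the ranks $r_{t,i}$. Then $\Delta m_t(y)$ is a finite linear combination of the scalar random variables $w(r_{t,i})$, with coefficients that are measurable with respect to the conditioning.

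\textbf{Step 1: the coefficients are fixed by the conditioning.} Within generation $t$, the learning rate $\eta_m$, the mean $m_t$ and the distribution parameters are functions of the history up to $t-1$. The candidates $x_{t,i}=m_t+\sigma_t A_t z_{t,i}$ (Appendix~\ref{app:cmaes}) are part of the conditioning. Strictly, the conditioning should be read as being on $x_{t,1:\lambda}$ together with the past history $\mathcal F_{t-1}$; I would state this convention explicitly. Under it, each vector $x_{t,i}-m_t$ and the scalar $\eta_m$ are measurable with respect to the conditioning $\sigma$-algebra.

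\textbf{Step 2: the weights are integrable.} The weights $w(r_{t,i})$ depend on $y_{t,1:\lambda}$ only through the ranking $\hat\pi_t$ in \eqref{eq:ranking_noisy}. Since $w$ takes finitely many values $w_1,\dots,w_\lambda$, each $w(r_{t,i})$ is a bounded random variable. Its conditional expectation therefore exists and equals $w_{t,i}^\star$ by definition \eqref{eq:pem_def}.

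\textbf{Step 3: linearity and pull-out.} For each coordinate of the $\R^d$-valued increment, I apply linearity of conditional expectation and the ``taking out what is known'' property:
\[
\E\!\left[\eta_m \sum_{i=1}^{\lambda} w(r_{t,i})(x_{t,i}-m_t)\,\middle|\,x_{t,1:\lambda}\right]=\eta_m\sum_{i=1}^{\lambda}\E\!\left[w(r_{t,i})\mid x_{t,1:\lambda}\right](x_{t,i}-m_t).
\]
The right-hand side equals $\Delta m_{\mathrm{PEM},t}$ from \eqref{eq:pem_update}. Integrability is immediate here: the weights are bounded and $x_{t,i}-m_t$ is fixed under the conditioning.

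\textbf{Possible obstacles.} The only subtle point is the conditioning convention in Step 1, namely whether $m_t$ and $\eta_m$ are treated as fixed given $x_{t,1:\lambda}$. Enlarging the conditioning to include $\mathcal F_{t-1}$ resolves this. The other subtlety is ties in the $y$'s, which make the rank ill-defined. Ties occur with probability zero under continuous noise, or can be handled by a fixed tie-breaking rule; either way $r_{t,i}$ remains a measurable function of $y_{t,1:\lambda}$ and the argument is unchanged.
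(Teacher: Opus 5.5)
Your proof is correct and is essentially the paper's argument: fix $\eta_m$ and $x_{t,i}-m_t$ under the conditioning, then use linearity to replace each $w(r_{t,i})$ by $\E[w(r_{t,i})\mid x_{t,1:\lambda}]=w_{t,i}^\star$. Your explicit enlargement of the conditioning to include $\mathcal F_{t-1}$, so that $m_t$ is genuinely measurable, is a useful precision that the paper's sketch leaves implicit. It does not change $w_{t,i}^\star$, because under the paper's oracle model the law of $y_{t,1:\lambda}$ given the history and the candidates depends only on the query locations.
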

\begin{proof}[Proof sketch]
Conditioning on $x_{t,1:\lambda}$ makes $(x_{t,i}-m_t)$ deterministic; then taking conditional expectations linearly replaces $w(r_{t,i})$ by $\E[w(r_{t,i})\mid x_{t,1:\lambda}] = w_{t,i}^\star$.
\end{proof}

CMA-ES state updates depend on $y_{t,1:\lambda}$ only through rank-weighted statistics (\eqref{eq:zw_def}--\eqref{eq:mean_update_zw}). In our implementation, PEM is applied by replacing each occurrence of $w(r_{t,i})$ with an estimate of $w_{t,i}^\star$. 

\subsection{Residual Bootstrapped PEM (RB-PEM)}
\label{subsec:rb}

Computing PEM $w_{t,i}^\star$ exactly would require extensive reevaluations per candidate, risking depth collapse~\eqref{eq:TA_B_over_klambda}. Residual bootstrapping estimates
$w_{t,i}^\star$ with a capped reevaluation budget per generation as~\eqref{eq:rb_cost_recalled}, in which $K_t \le K_{\max}\ll \lambda$ counts only additional reevaluations used to calibrate the bootstrap noise model. As shown by~\eqref{eq:TB_B_over_lambda_plus_EK}, this keeps the depth close to $B/\lambda$ whenever $\E[K_t]\ll\lambda$. 

In Bayesian context, $w_{t,i}^\star$ is a posterior expectation of a rank weight under the local evaluation-noise law. A parametric Bayesian alternative would sample latent values from a fitted likelihood/posterior, rank them, and average $w(\cdot)$ over those ranks. RB-PEM uses the same Monte Carlo structure, but draws synthetic values from a nonparametric residual pool. This avoids specifying a likelihood for non-Gaussian, heteroscedastic, or state-dependent noise, amortizes noise information across generations, and uses the same bootstrap rankings for any choice of $w$.

In practice, residual bootstrapping combines two complementary design principles aligned with the fixed-budget protocol, see Appendix~\ref{app:rb_impl} for details of implementation. Note that residual bootstrapping could have statistical risk of \emph{residual-pool mismatch} due to finite pool size, nonstationary drift, covariate-dependent shape changes, or misspecified center/scale standardization. A convenient way to quantify mismatch is the Wasserstein-1 distance $W_1(\widehat D_t, D_t)$ between the pool and a target distribution $D_t$. Detailed diagnostics and mismatch decompositions are deferred to Appendix~\ref{app:pool_theory}. Empirical 
tests can be found in Appendix~\ref{app:experiments}.

\subsection{Probe-and-Switch}
\label{subsec:probe_switch}

Residual bootstrapping is designed for \emph{high-misranking} regimes. When the ranking is already stable, smoothing can weaken selection pressure and the reevaluation budget $K_t$ becomes overhead. We therefore use a low-cost probe to decide whether to run RB-PEM or to run standard CMA-ES. See Appendix~\ref{app:decision} for details.

\section{Theory}
\label{sec:theory}

This section sets a theoretical foundation for our method, which clearly shows the following:
(i) \textsc{PEM} can be viewed as a \emph{Rao--Blackwellization}~\citet{LehmannCasella1998,CasellaBerger2002} of the standard noisy rank-based update; (ii) under \emph{local curvature}, conditional update dispersion incurs an inevitable expected objective loss; (iii) RB-PEM is effective whenever the residual pool distribution is close (in $W_1$) to the local standardized noise distribution and \emph{near ties} are rare.


\subsection{PEM as the Conditional Mean Update}

Lemma~\ref{lem:pem_identity} already states the key identity:
$\Delta m_{\mathrm{PEM}}=\E[\Delta m(y)\mid x_{1:\lambda}]$.
This identity can be simply viewed as a Rao--Blackwellization with respect to evaluation noise:
conditioning on the candidate set integrates out ranking randomness without changing the conditional mean update.

\begin{lemma}
\label{lem:pem_mse_opt}
Let $\Delta m(y)\in\R^d$ be any square-integrable random update and let $\mathcal{G}:=\sigma(x_{1:\lambda})$ be the $\sigma$-field generated by the sampled candidates.
Then among all $\mathcal{G}$-measurable (candidate-deterministic) vectors $a(x_{1:\lambda})$,
\begin{align}
  &\Delta m_{\mathrm{PEM}}=\E[\Delta m(y)\mid \mathcal{G}]\notag\\
  =&\arg\min_{a \in \mathcal{G}\text{-measurable}}
  \E\!\left[\|\Delta m(y)-a\|^2\mid \mathcal{G}\right]
  .
  \label{eq:pem_is_l2_projection}
\end{align}
\end{lemma}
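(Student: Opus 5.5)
The plan has two parts. The first equality in \eqref{eq:pem_is_l2_projection} is Lemma~\ref{lem:pem_identity}, which we take as given. The second equality is the standard fact that conditional expectation is the $L^2$-orthogonal projection onto $\mathcal{G}$-measurable vectors, and we prove it by a bias--variance expansion carried out conditionally on $\mathcal{G}$. Write $b:=\E[\Delta m(y)\mid\mathcal{G}]$. Square-integrability of $\Delta m(y)$ makes $b$ well defined and also square-integrable, by conditional Jensen.

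Fix any $\mathcal{G}$-measurable $a=a(x_{1:\lambda})$ and expand
\[
  \|\Delta m(y)-a\|^2=\|\Delta m(y)-b\|^2+2\langle \Delta m(y)-b,\,b-a\rangle+\|b-a\|^2 .
\]
Now take $\E[\cdot\mid\mathcal{G}]$ term by term. Since $b-a$ is $\mathcal{G}$-measurable, the pull-out property gives $\E[\langle \Delta m(y)-b,b-a\rangle\mid\mathcal{G}]=\langle \E[\Delta m(y)\mid\mathcal{G}]-b,\,b-a\rangle=0$. For the same reason, $\E[\|b-a\|^2\mid\mathcal{G}]=\|b-a\|^2$. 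Hence
\[
  \E\bigl[\|\Delta m(y)-a\|^2\mid\mathcal{G}\bigr]=\E\bigl[\|\Delta m(y)-b\|^2\mid\mathcal{G}\bigr]+\|b-a\|^2,
\]
which is minimized exactly when $a=b$, almost surely. Uniqueness of the argmin is therefore only up to $\mathcal{G}$-null sets, and I would add a remark that the argmin is meant in this sense.

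The only delicate step is justifying the pull-out in the cross term, since $\langle \Delta m(y)-b,b-a\rangle$ need not be integrable for an arbitrary $\mathcal{G}$-measurable $a$. I would first handle this by restricting to $a$ with $\E\|a\|^2<\infty$; then Cauchy--Schwarz makes every term integrable. For general $a$, I would localize on the events $\{\|a\|\le n\}\in\mathcal{G}$, or use the generalized conditional expectation of nonnegative variables, and note that the identity holds on each such event. Alternatively, conditioning on $x_{1:\lambda}$ makes $a$ a constant vector, so one can argue pointwise with the regular conditional law of $y$ given $x_{1:\lambda}$; then the claim is just that the mean minimizes $c\mapsto\E\|Z-c\|^2$ for a square-integrable $Z$. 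I expect this integrability and measurability bookkeeping to be the only obstacle; the algebra itself is routine.
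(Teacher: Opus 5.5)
Your proposal is correct and matches the paper's proof: the paper likewise writes $\E[\|\Delta m-a\|^2\mid\mathcal{G}]$ as the conditional dispersion about $\E[\Delta m\mid\mathcal{G}]$ plus $\|a-\E[\Delta m\mid\mathcal{G}]\|^2$, since the cross term has zero conditional mean, and it takes the first equality from Lemma~\ref{lem:pem_identity}. Your integrability bookkeeping and the remark that the minimizer is unique only almost surely are refinements the paper omits, not a different route.
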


\subsection{Update Dispersion Induces Curvature Loss} \label{subsec:theory_dispersion}

The next result quantifies how evaluation noise manifests itself as \emph{curvature loss}: even when the conditional mean update is unchanged, a dispersed (random) update incurs extra expected objective value in a locally curved region.

\begin{theorem}
\label{thm:convex_pem_progress}
Let $X=m+\Delta m(y)$ and $\bar X = \E[X\mid x_{1:\lambda}] = m+\Delta m_{\mathrm{PEM}}$.
Then under Assumption~\ref{ass:local_strong_convex} (Appendix~\ref{app:pool_theory}),
\begin{equation}
  \E\left[f(X)\mid x_{1:\lambda}\right]\ge f(\bar X)
  +\frac{\alpha}{2}\,\E\!\left[\|X-\bar X\|^2\mid x_{1:\lambda}\right].
  \label{eq:strongconvex_gap_main}
\end{equation}
\end{theorem}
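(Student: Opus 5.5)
The plan is to apply the strong-convexity lower bound at the anchor point $\bar X$ and then take conditional expectations, using Lemma~\ref{lem:pem_identity} to kill the linear term. Assumption~\ref{ass:local_strong_convex} presumably states that $f$ is $\alpha$-strongly convex on a local region $\mathcal{U}$ that contains all realizations of $X$ and $\bar X$ given $x_{1:\lambda}$. If $f$ is differentiable, this gives, for every realization of $X$,
\[
  f(X)\;\ge\; f(\bar X)+\langle g,\,X-\bar X\rangle+\frac{\alpha}{2}\|X-\bar X\|^2 .
\]
Here $g=\nabla f(\bar X)$. If $f$ is not differentiable, I would take $g$ to be any subgradient of the $\alpha$-strongly convex function at $\bar X$, and the same inequality holds. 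The essential point is that $g$ depends only on $\bar X$, so it is $\mathcal{G}$-measurable with $\mathcal{G}=\sigma(x_{1:\lambda})$.

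Next I take $\E[\cdot\mid x_{1:\lambda}]$ of both sides. The term $f(\bar X)$ is $\mathcal{G}$-measurable and comes out unchanged. Since $g$ is $\mathcal{G}$-measurable, the linear term factors as $\langle g,\E[X-\bar X\mid\mathcal{G}]\rangle$. By Lemma~\ref{lem:pem_identity}, $\E[X\mid\mathcal{G}]=m+\E[\Delta m(y)\mid\mathcal{G}]=m+\Delta m_{\mathrm{PEM}}=\bar X$ (recall $m$ is $\mathcal{G}$-measurable), so this term vanishes. What remains is exactly \eqref{eq:strongconvex_gap_main}. The same step also gives the bias--variance reading consistent with Lemma~\ref{lem:pem_mse_opt}: the extra term is the conditional dispersion that PEM removes.

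The main obstacle is technical and is the localization. The inequality needs both $X$ and $\bar X$ to lie in the region where Assumption~\ref{ass:local_strong_convex} applies. This is easy to guarantee, because $X$ ranges over a finite set of points: one point per permutation of the ranks, each of the form $m+\eta_m\sum_i w(r_i)(x_i-m)$. Then $\bar X$ is a convex combination of these points, so it lies in their convex hull. Hence if the assumption gives strong convexity on a convex set containing $m$ and all the $x_{t,i}$ (for $\eta_m\le1$, when the $X$'s are themselves convex combinations of $m$ and the $x_{t,i}$), every point involved is covered.

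There are two further integrability checks. The conditional expectations are finite because $X$ takes only finitely many values given $x_{1:\lambda}$. Second, if $f$ is only assumed to have an $\alpha I$ lower Hessian bound, the displayed inequality follows from Taylor's theorem with integral remainder along the segment $[\bar X,X]$. That segment lies in the convex set just described, so the assumption covers it.
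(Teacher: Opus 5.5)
Your proposal is correct and follows the paper's proof essentially verbatim: apply the strong-convexity inequality with $u=X$, $v=\bar X$, take $\E[\cdot\mid x_{1:\lambda}]$, and the linear term vanishes because $\nabla f(\bar X)$ is $\sigma(x_{1:\lambda})$-measurable and $\E[X-\bar X\mid x_{1:\lambda}]=0$. Your localization discussion is not needed, since Assumption~\ref{ass:local_strong_convex} itself posits a convex set $\mathcal{C}$ with $X,\bar X\in\mathcal{C}$ almost surely, and that is exactly what the paper invokes.
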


Theorem~\ref{thm:convex_pem_progress} shows that,
under the strong-convexity Assumption~\ref{ass:local_strong_convex}, dispersion incurs a Jensen gap (see the proof in Appendix~\ref{app:theory}). Note that for $\alpha=0$ (mere convexity), Theorem~\ref{thm:convex_pem_progress} reduces to the conditional Jensen inequality
$\E[f(X)\mid x_{1:\lambda}] \ge f(\bar X)$. Equation~\eqref{eq:strongconvex_gap_main} says that even if a noisy rank-based strategy has the same conditional mean update as PEM, its conditional dispersion still creates an inevitable \emph{expected objective penalty}.
This turns ``ranking noise'' into an explicit per-generation loss term proportional to $\E[\|X-\bar X\|^2\mid x_{1:\lambda}]$.

\subsection{Fixed-Budget Prediction: Per-Evaluation Dispersion Matters}\label{subsec:theory_fixed_budget}

We now translate Theorem~\ref{thm:convex_pem_progress} into a fixed-budget comparison. The key point is that, under the fixed-budget protocol, dispersion per evaluation outweighs dispersion per generation. Theorem~\ref{thm:convex_pem_progress} shows that, on a region where $f$ is locally $\alpha$-strongly convex,
conditional update dispersion produces a local expected objective penalty of at least
$\frac{\alpha}{2}\E[\|X-\bar X\|^2\mid x_{1:\lambda}]$ in that generation.
Proposition~\ref{prop:depth_under_overhead} in Appendix~\ref{app:pool_theory} shows that any additional reevaluation overhead reduces depth $T$ under a fixed budget.
Consequently, in strictly fixed-budget regimes, the relevant efficiency notion is how much conditional dispersion is reduced \emph{per additional oracle call}.
Our approach targets dispersion reduction by integrating ranking uncertainty at the selection stage via PEM,
while keeping the reevaluation overhead $K_t$ capped and reusable.

\subsection{Residual Bootstrapping Approximation of PEM}

Residual bootstrapping approximates $w_i^\star=\E[w(r_i)\mid x_{1:\lambda}]$ by simulating many pseudo-rankings from a fitted noise model.
This subsection makes explicit when this approximation is accurate.

Let $w:\{1,\ldots,\lambda\}\to\R$ be the deterministic rank-weight map, and define
$\|w\|_\infty:=\max_k|w(k)|$ and $\Delta_w:=\max_{k}|w(k+1)-w(k)|$.
For a fixed candidate set $\{x_{1:\lambda}\}$, define the ``true'' expected rank weight under $D_t$ similarly with~\eqref{eq:pem_def} as
\[w_i^\star~=~\E\!\left[w\!\left(\rank_i(y_{1:\lambda})\right)\middle| x_{1:\lambda}\right],
\]
and define the analogous bootstrap expectation 
\[
\widetilde w_i
~=~
\E\!\left[w\!\left(\rank_i(\tilde y_{1:\lambda})\right)\middle| x_{1:\lambda}\right],
\quad
\tilde y_i := f(x_i)+s(x_i)\,\hat\varepsilon_i.
\]

The key difficulty is that ranks are discontinuous functions of the noise vector.
In Appendix~\ref{app:pool_theory}, under reasonable assumptions (Assumption~\ref{ass: std_noise_factor} and \ref{ass:anti_conc}), Proposition~\ref{prop:dist_to_rank_main} reduces RB-PEM inaccuracy to the Wasserstein mismatch $W_1(\widehat D_t,D_t)$. Then Proposition~\ref{prop:adaptive_pool_main} shows how this mismatch decomposes into a finite-pool term plus three bias terms. Combined with Proposition~\ref{prop:pool_mismatch_decomp_main}, this yields a concrete checkable route from residual-pool mismatch to expected-weight error. Together with Theorem~\ref{thm:convex_pem_progress}, this supports the depth-over-fidelity mechanism: under fixed budgets, a small additive overhead $K_t$ is worthwhile precisely when it reduces conditional update dispersion enough to offset its cost. 

\subsection{Decision Theory for Probe-and-Switch}
\label{subsec:decision_probeswitch}

Define the conditional advantage of RB-PEM over CMA-ES given probe value $p$ as
\begin{equation}
  \Delta(p)
  \;\triangleq\;
  \E[L_{0}-L_{1}\mid P=p].
\end{equation}
Thus, $\Delta(p)>0$ means RB-PEM has lower conditional expected loss. Proposition~\ref{prop:threshold_optimality} of Appendix~\ref{app:decision} indicates that wherever $\Delta(p)$ has sign alternating provides a potential switch threshold.

\section{Experiments}
\label{sec:experiments}

\subsection{Experimental Setup}

All comparisons follow the fixed $B$ budget accounting of~\eqref{eq:budget_accounting}.
Performance is measured by the simple regret~\eqref{eq:simple_regret} of the final
recommendation $\hat x_B$.
We report $\log_{10}(f(\hat x_B)-f^\star)$ throughout, where smaller values indicate better performance.

Our primary benchmark is the \texttt{bbob-noisy} suite from COCO, comprising 30 functions
$\times$ 15 instances $\times$ 3 dimensions ($d \in \{10, 20, 40\}$), yielding 450 problem
instances per dimension.
Unless otherwise stated, we use the default CMA-ES population size
$\lambda = 4 + \lfloor 3\ln d\rfloor$ (giving $\lambda{=}15$, $\mu{=}7$ at $d{=}40$) and identical initialization across methods. RB-PEM uses bootstrap size $B_{\mathrm{boot}}{=}32$ and reevaluation cap $K_{\max}{=}1$ throughout.
Table~\ref{tab:probeswitch_comparison} reports aggregate win/loss counts on all 30 functions
at $B=200d$; following the probe-statistic clustering rule (Appendix~\ref{app:high_misranking_complete}),
Figures~\ref{fig:money_plot}--\ref{fig:depth_fidelity} and the appendix ablations focus on a
high-misranking subset of 15 functions at $d=40$ with $B=100d$, where misranking dominates.

Baselines include:
(i)~vanilla CMA-ES; 
(ii)~fixed-$k$ resampling (Res.($k$)) with $k \in \{5,10\}$; 
(iii)~UH-CMA-ES; 
and (iv)~RB-PEM. 
There are six external tasks with 50 random seeds each: LQR control, Breast Cancer classification, Digits recognition, CartPole-HT (heavy-tailed),
standard CartPole, and Pendulum (Table~\ref{tab:probeswitch_comparison}).

\textbf{Reproducibility}. We provide one-click reproducible source code at \url{https://github.com/sichen-wang/Depth-over-Fidelity_ICML2026}.

\subsection{Main Results on COCO bbob-noisy}

Figure~\ref{fig:money_plot} presents fixed-budget convergence curves on four representative
\texttt{bbob-noisy} functions, showing that RB-PEM achieves substantially lower final
regret than evaluation-stage denoising baselines while maintaining near-maximal depth.
A key observation is that depth markers are almost non-overlapping across methods:
in these high-misranking regimes, greater depth strongly correlates with better final performance,
supporting our depth-over-fidelity thesis. 

\begin{figure*}[t]
  \centering
  \includegraphics[width=0.79\textwidth]{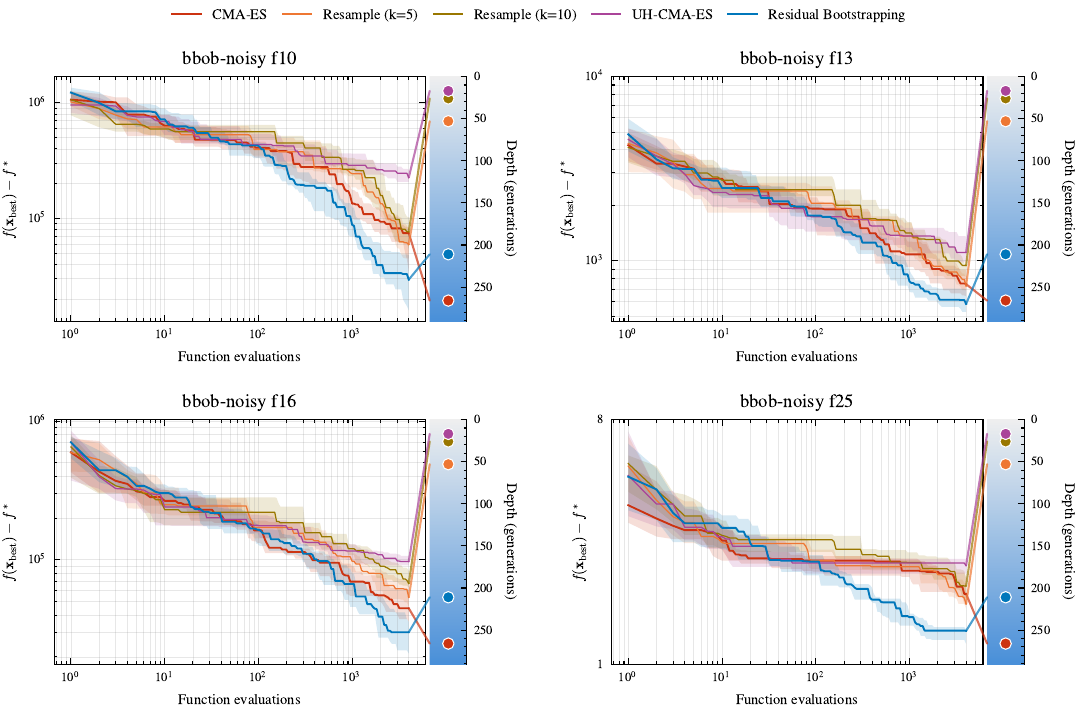}
  \caption{\textbf{RB-PEM improves fixed-budget progress on COCO \texttt{bbob-noisy}.}
  Median (solid) and interquartile range (shaded) of $\log_{10}(f(\hat{x})-f^\star)$ versus evaluations on four representative high-misranking functions (f110, f113, f116, f125; $d=40$, $B=100d$, 15 instances each).
  Right-side markers indicate the number of completed CMA-ES generations (depth).
  Methods that preserve depth (CMA-ES, RB-PEM) generally achieve lower final regret; among these, RB-PEM further improves by integrating ranking uncertainty at the selection stage rather than spending evaluations on per-candidate denoising.
  Evaluation-stage baselines (Resample, UH-CMA-ES) sacrifice depth for per-generation fidelity, resulting in higher final regret despite cleaner intra-generation rankings.
  \emph{Protocol:} Each of the 15 COCO instances is run once per method (no repeated seeds); lines show the median and shading spans the 25th--75th percentile (IQR) across instances.}
  \label{fig:money_plot}
\end{figure*}

\subsection{Depth--Fidelity Trade-off}

\begin{figure}[!htb]
  \centering
  \includegraphics[width=0.79\columnwidth]{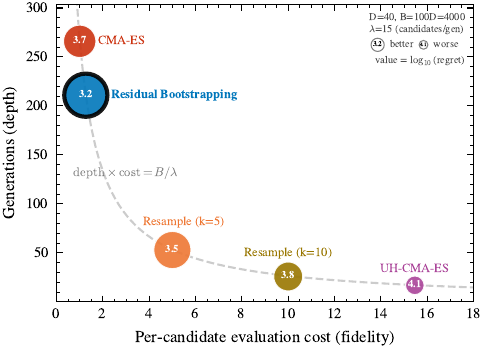}
  \caption{\textbf{Depth--fidelity trade-off under a fixed budget.}
  Each method is plotted by its average per-candidate evaluation cost (fidelity, $x$-axis) and number of completed generations (depth, $y$-axis) on the high-misranking COCO \texttt{bbob-noisy} subset ($d=40$, $B=100d$, 225 problems = 15 functions $\times$ 15 instances).
  Bubble annotations encode the median final $\log_{10}$ regret (smaller is better).
  The grey hyperbola marks the budget constraint $\text{depth}\times\text{cost}\approx B/\lambda$: higher fidelity necessarily reduces depth.
  RB-PEM achieves the lowest regret while remaining in the low-cost / high-depth region, supporting the thesis that selection-stage uncertainty integration is more sample-efficient than evaluation-stage denoising under strict budgets.
  \emph{Protocol:} Each instance is run once per method; each bubble represents one method and its annotation reports the median final regret across all 225 instances.}
  \label{fig:depth_fidelity}
\end{figure}

Figure~\ref{fig:depth_fidelity} confirms the depth--fidelity trade-off of Section~\ref{subsec:fidelity_over_depth} empirically: under a fixed budget the baselines spread along the depth--fidelity frontier, and RB-PEM occupies the low-cost/high-depth corner that minimizes regret.

\subsection{Probe-and-Switch Evaluation}

Figure~\ref{fig:ranking_heatmap} shows RB-PEM excelling under severe ranking noise (e.g., heavy-tailed RL rollouts, noisy HPO) but slipping below vanilla CMA-ES on low-misranking tasks, where its extra smoothing and reevaluations become unnecessary overhead---motivating an adaptive policy.

\begin{figure}[!htb]
  \centering
  \includegraphics[width=0.77\columnwidth]{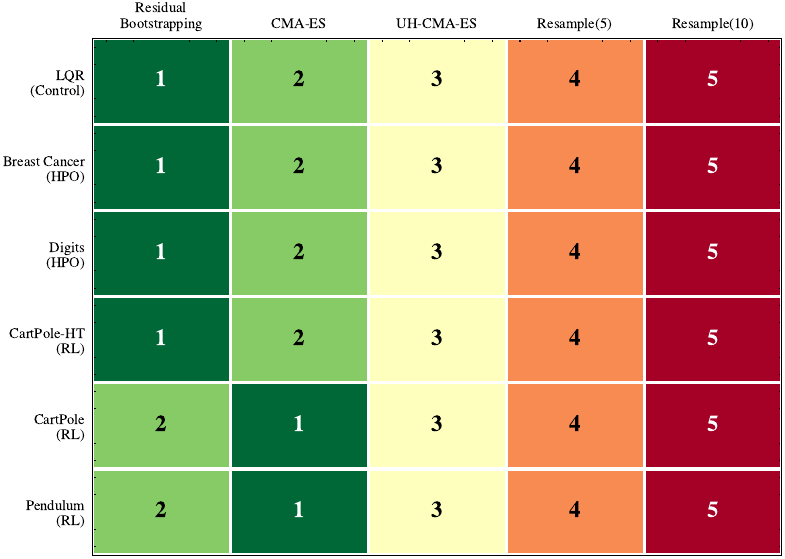}
  \caption{\textbf{Task-level algorithm ranking} (1\,=\,best).
  Median rank across instances for each method (columns) on each task (rows): 15 high-misranking COCO \texttt{bbob-noisy} functions and six external tasks ($d=40$, $B=100d$ for COCO; task-specific budgets otherwise; darker is better).
  RB-PEM ranks first on most high-misranking tasks but underperforms CMA-ES on low-misranking tasks (e.g., CartPole, Pendulum), where the bootstrap overhead outweighs the smoothing benefit.
  This pattern motivates probe-and-switch (Section~\ref{subsec:decision_probeswitch}).
  \emph{Protocol:} 1 run per COCO instance (15 each), 50 seeds per external task (Appendix: per-task budgets/noise); ranks use each task's median objective.}
  \label{fig:ranking_heatmap}
\end{figure}

We verify that the high-misranking subset used in
Figures~\ref{fig:money_plot}--\ref{fig:ranking_heatmap} is not a post-hoc
selection: Figure~\ref{fig:rank_by_probe} sorts all 30 \texttt{bbob-noisy}
functions by the probe statistic $P$ and reveals a sharply bimodal structure
(gap $0.145$, $\approx 3\times$ either cluster's width) that yields the same
partition for any $\tau_{\text{cluster}}\in[0.16,0.29]$. The probe also
predicts \emph{when} uncertainty integration helps: RB-PEM and Probe-and-Switch
lead on the high-misranking cluster but stay within roughly one rank of the
others on the low-misranking cluster.

\begin{figure*}[t]
  \centering
  \includegraphics[width=0.90\textwidth]{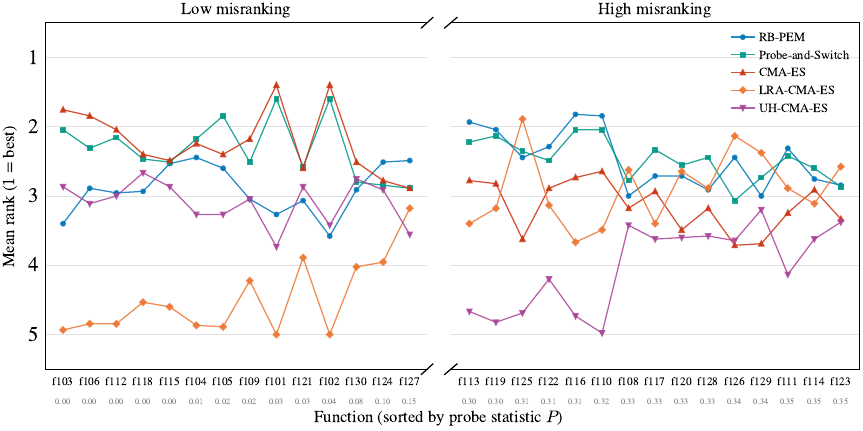}
  \caption{\textbf{Per-function ranking across all 30 \texttt{bbob-noisy}
  functions reveals a bimodal misranking structure that defines the
  high-misranking subset.}
  Mean rank (1\,=\,best) of five methods, sorted by the median probe statistic
  $P$ (Eq.~\eqref{eq:probe_statistic}). The $P$-values are sharply bimodal:
  14 \emph{low-misranking} functions cluster in $P\in[0.00,0.15]$ (left) and
  15 \emph{high-misranking} ones in $P\in[0.30,0.35]$ (right), separated by a
  gap of $0.145$ ($\approx 3\times$ either cluster's width), so any threshold
  $\tau_{\text{cluster}}\in[0.16,0.29]$ produces the same partition.
  RB-PEM and Probe-and-Switch rank first or second on the
  high-misranking cluster, while LRA-CMA-ES and UH-CMA-ES trail; on the
  low-misranking cluster all five methods are within roughly one rank.
  We exclude f107 (high $P$ from step-ellipsoidal structure, not noise);
  the remaining 15 functions are the subset used in
  Figures~\ref{fig:money_plot}--\ref{fig:ranking_heatmap} and the appendix ablations.
  \emph{Protocol:} $B{=}100d$; mean rank averaged over
  $d\in\{10,20,40\}\times15$ instances per function. Probe $P$ computed at
  $d{=}40$ from 20 ES-sampled candidate sets with $(\lambda,\mu)=(32,8)$.}
  \label{fig:rank_by_probe}
\end{figure*}

Figure~\ref{fig:single_crossing} plots the conditional advantage
$\Delta(p) = \mathbb{E}[L_0 - L_1 \mid P = p]$ against the probe statistic $P$.
The curve exhibits approximate \emph{single crossing}---$\Delta(p)<0$ for small $P$ (stable ranks) and $\Delta(p)>0$ beyond a threshold---which
enables a \textbf{probe-and-switch} policy: route tasks with $P \ge \tau$ ($\tau = 0.12$) to RB-PEM and
the rest to CMA-ES, yielding purely positive improvement under the model of
Section~\ref{subsec:decision_probeswitch}.

\begin{figure}[H]
  \centering
  \includegraphics[width=0.86\columnwidth]{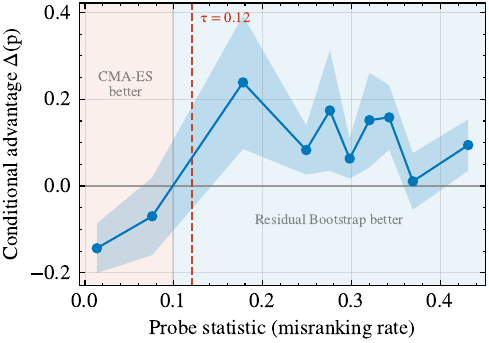}
  \caption{\textbf{Conditional advantage and single crossing.}
  Estimated $\Delta(p)=\mathbb{E}[L_{\mathrm{CMA}}-L_{\mathrm{RB\text{-}PEM}} \mid P=p]$ vs.\ the probe statistic $P$ (normalized rank disagreement, Eq.~\eqref{eq:probe_statistic}) on all 30 COCO \texttt{bbob-noisy} functions ($d=40$, $B=500d$); positive $\Delta$ favors RB-PEM.
  The curve crosses zero once near $\tau=0.12$ (dashed line), giving a threshold rule (RB-PEM when $P\ge\tau$, else CMA-ES) that is Bayes-optimal under single crossing (Proposition~\ref{prop:threshold_optimality}).
  \emph{Protocol:} 450 problems (30 functions $\times$ 15 instances), 1 run each; bin means over 12 quantile bins, error bars $\pm 1.96\,\mathrm{SE}$ (95\% CI).}
  \label{fig:single_crossing}
\end{figure}

Scanning thresholds on COCO identifies two robust operating points: an aggressive $\tau = 0.12$ (switch more often) and a conservative $\tau = 0.22$ (switch only under strong misranking evidence).
Figure~\ref{fig:threshold_transfer} shows these COCO-calibrated thresholds transfer reasonably to external tasks, though new domains may warrant re-tuning.
\begin{figure}[H]
  \centering
  \includegraphics[width=0.86\columnwidth]{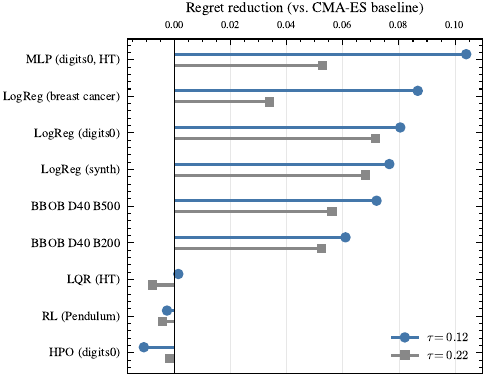}
  \caption{\textbf{Threshold transfer to external tasks.}
  Regret reduction of Probe-and-Switch relative to CMA-ES on nine transfer targets (COCO and external), using COCO-calibrated thresholds $\tau\in\{0.12,\,0.22\}$ without per-task re-tuning.
  The aggressive $\tau=0.12$ activates RB-PEM more often, yielding larger reductions on high-misranking tasks (e.g., LQR, MLP), while the conservative $\tau=0.22$ mitigates negative transfer on low-misranking ones (e.g., HPO, Pendulum).
  \emph{Protocol:} mean regret reduction per target (over instances/seeds), ordered by $\Delta(\tau{=}0.12)$; 1 run per COCO instance, 50 seeds per external task.}
  \label{fig:threshold_transfer}
\end{figure}

\subsection{Comprehensive Comparison}

Table~\ref{tab:probeswitch_comparison} reports pairwise win/loss counts for probe-and-switch against each competitor across COCO and six external tasks. It wins decisively against all evaluation-stage baselines (Res.(10), Res.(5), UH-CMA-ES; all above 77\%) and holds a moderate advantage over its constituent algorithms (CMA-ES, RB-PEM), indicating that the switch reliably activates the beneficial mode without unnecessary overhead. Its significance against CMA-ES strengthens with dimension, consistent with misranking becoming more problematic in higher dimensions; on external tasks it tracks RB-PEM under high rank instability and defaults to CMA-ES otherwise.

\begin{table}[t] 
\centering
\scriptsize
\setlength{\tabcolsep}{2.8pt}
\renewcommand{\arraystretch}{1.1}
\begin{tabular}{@{}l r@{/}l r@{/}l r@{/}l r@{/}l r@{/}l@{}}
\toprule
  & \multicolumn{2}{c}{\shortstack{vs\\Res.(10)}}
  & \multicolumn{2}{c}{\shortstack{vs\\Res.(5)}}
  & \multicolumn{2}{c}{\shortstack{vs\\UH-CMA-ES}}
  & \multicolumn{2}{c}{\shortstack{vs\\CMA-ES}}
  & \multicolumn{2}{c}{\shortstack{vs\\RB-PEM}} \\
\midrule
\multicolumn{11}{@{}l}{\textit{COCO benchmark (30 functions $\times$ 15 instances per dim, $B=200d$)}}\\
$d=10$ & \textbf{390} & \textbf{60}\textsuperscript{***} & \textbf{364} & \textbf{86}\textsuperscript{***} & \textbf{359} & \textbf{91}\textsuperscript{***} & \textbf{230} & \textbf{210} & \textbf{281} & \textbf{169}\textsuperscript{***} \\
$d=20$ & \textbf{373} & \textbf{77}\textsuperscript{***} & \textbf{356} & \textbf{94}\textsuperscript{***} & \textbf{368} & \textbf{82}\textsuperscript{***} & \textbf{239} & \textbf{195}\textsuperscript{*} & \textbf{269} & \textbf{181}\textsuperscript{***} \\
$d=40$ & \textbf{363} & \textbf{87}\textsuperscript{***} & \textbf{347} & \textbf{103}\textsuperscript{***} & \textbf{359} & \textbf{91}\textsuperscript{***} & \textbf{257} & \textbf{170}\textsuperscript{**} & \textbf{270} & \textbf{180}\textsuperscript{***} \\
\midrule
\multicolumn{11}{@{}l}{\textit{External tasks (50 seeds each)}}\\
LQR            & \textbf{50} & \textbf{0}\textsuperscript{***}  & \textbf{46} & \textbf{4}\textsuperscript{***}  & \textbf{46} & \textbf{4}\textsuperscript{***}  & \textbf{31} & \textbf{19} & 23 & 27 \\
Breast Cancer  & \textbf{50} & \textbf{0}\textsuperscript{***}  & \textbf{37} & \textbf{13}\textsuperscript{***} & 25 & 25 & 20 & 29 & 22 & 28 \\
Digits         & \textbf{49} & \textbf{1}\textsuperscript{***}  & \textbf{40} & \textbf{10}\textsuperscript{***} & \textbf{31} & \textbf{19} & 17 & 33\textsuperscript{*} & 17 & 33\textsuperscript{*} \\
CartPole-HT    & \textbf{48} & \textbf{2}\textsuperscript{***}  & \textbf{43} & \textbf{6}\textsuperscript{***}  & \textbf{30} & \textbf{19} & 21 & 23 & 20 & 24 \\
CartPole       & \textbf{44} & \textbf{6}\textsuperscript{***}  & \textbf{36} & \textbf{14}\textsuperscript{**}  & \textbf{26} & \textbf{24} & 19 & 30 & 21 & 27 \\
Pendulum       & \textbf{46} & \textbf{4}\textsuperscript{***}  & \textbf{38} & \textbf{11}\textsuperscript{***} & \textbf{32} & \textbf{18} & \textbf{31} & \textbf{19} & \textbf{31} & \textbf{19} \\
\midrule
\textbf{Total W/L} & \textbf{1413} & \textbf{237} & \textbf{1307} & \textbf{341} & \textbf{1276} & \textbf{373} & \textbf{865} & \textbf{728} & \textbf{954} & \textbf{688} \\
\textbf{Win Rate}  & \multicolumn{2}{c}{\textbf{85.6\%}} & \multicolumn{2}{c}{\textbf{79.3\%}} & \multicolumn{2}{c}{\textbf{77.4\%}} & \multicolumn{2}{c}{\textbf{54.3\%}} & \multicolumn{2}{c}{\textbf{58.1\%}} \\
\bottomrule
\end{tabular}
\caption{\textbf{Probe-and-Switch vs.\ competitors}: pairwise win/loss (W/L) counts on COCO \texttt{bbob-noisy} (30 functions $\times$ 15 instances per dim, $B=200d$, $d\in\{10,20,40\}$) and six external tasks (50 seeds each).
Bold indicates Probe-and-Switch wins ($W>L$).
Stars denote two-sided Wilcoxon signed-rank significance: \textsuperscript{***}$p{<}0.001$, \textsuperscript{**}$p{<}0.01$, \textsuperscript{*}$p{<}0.05$.
Res.($k$): $k$-fold fixed resampling; RB-PEM: residual-bootstrap PEM without probe-and-switch.
\emph{Protocol:} Probe-and-Switch uses threshold $\tau{=}0.12$; each COCO instance is run once per method and each external task with 50 independent seeds.}
\label{tab:probeswitch_comparison}
\end{table}

Appendix~\ref{app:experiments} provides controlled mechanism checks:
(i)~tests linking higher misranking to larger update dispersion and objective loss, and
(ii)~ablations confirming RB-PEM's persistent gain under tightly capped reevaluation, supporting the ``selection-stage integration'' interpretation.

\section{Conclusion}
\label{sec:conclusion}

In strictly fixed-budget noisy black-box optimization, the cost of improving intra-generation ranking fidelity is paid
in reduced \emph{depth}: every additional evaluation spent on denoising or resampling shortens the number of
distribution updates that can be executed before the budget is exhausted.
By making this accounting explicit, we formalized a depth--fidelity trade-off for noisy rank-based evolution
strategies and argued that, in high-misranking regimes, the lost depth can dominate the gains from per-generation
fidelity.

We operationalized \emph{selection-stage uncertainty integration} as
\emph{probabilistic elite membership} (PEM)---a Rao--Blackwellization of the
noisy rank-based update that replaces hard rank assignments with expected
rank weights, preserving the mean update while reducing its dispersion.
We approximated PEM at near-unit cost via \emph{residual-bootstrapped PEM}
(RB-PEM), which calibrates a local noise model from a small capped reevaluation
set, amortized across generations via pooled residuals and backed by a
falsifiable mismatch decomposition and runtime diagnostics; a low-cost
\emph{probe-and-switch} policy reverts to vanilla CMA-ES when ranks are stable.

Our theory clarifies why depth-preserving uncertainty integration can win: under local curvature, conditional update
dispersion induces an unavoidable expected objective penalty, making \emph{dispersion reduction per oracle call} the
relevant efficiency metric in fixed-budget regimes.
Empirically, RB-PEM achieves consistently steeper fixed-budget progress on the COCO bbob-noisy suite and diverse
external tasks, outperforming evaluation-stage denoising baselines whose higher per-generation costs collapse depth,
while probe-and-switch improves robustness across regimes.

Overall, the results support a testable thesis:
\emph{when budgets are strict and ranking uncertainty is high, integrating uncertainty into selection is more
sample-efficient than spending evaluations to eliminate it}.
Promising directions include extending the residual model to correlated and nonstationary noise, and porting depth-over-fidelity to other population-based optimizers.

\clearpage
\section*{Impact Statement}

RB-PEM enables noisy black-box optimizers to extract more progress from a fixed number of evaluations, which directly reduces the computational cost of tasks such as RL policy search with stochastic rollouts and hyperparameter optimization with noisy validation. As a general-purpose algorithmic improvement, it inherits the dual-use profile common to all black-box optimizers; we are not aware of any application-specific risks particular to our method.

\section*{Acknowledgments}
The first author would like to thank his parents for their unwavering support and encouragement; his algorithm competition coaches Wenwu Wang and Yuan Sun, who sparked his interest in computer science; his ICPC teammates Zhang Chen, An Yan, and Yuqi Peng, and the many fellow competitors who accompanied him through eight years of algorithmic contests; Prof.~Xiaoying Tang and the members of T-Lab at CUHK-Shenzhen, who patiently guided a newcomer into academic research; and Siyuan Xu, whose support and companionship sustained him throughout.

This work is supported by Guangdong Province (No. 2023QN10X215), 2023 Shenzhen National Science Foundation (No. 20231128220938001), Shenzhen Science and Technology Program (No. JCYJ20241202130548062), the Natural Science Foundation of Shenzhen (No. JCYJ20230807142703006), and the Key Research Platforms and Projects of the Guangdong Provincial Department of Education (No.2023ZDZX1034).

\newpage
\IfFileExists{icml2026/icml2026.bst}{\bibliographystyle{icml2026/icml2026}}{\bibliographystyle{plainnat}}
\bibliography{refs}


\newpage
\appendix
\onecolumn

\section{CMA-ES Specification}
\label{app:cmaes}

Covariance matrix adaptation evolution strategy (CMA-ES) maintains a Gaussian search distribution
\begin{equation}
  x \sim \mathcal{N}\!\left(m_t,\;\sigma_t^2 C_t\right),
  \label{eq:cmaes_search_dist}
\end{equation}
parameterized by mean $m_t\in\R^d$, global step-size $\sigma_t>0$, and covariance $C_t\in\R^{d\times d}$.
More explicitly, it samples $\lambda$ candidates by drawing standardized
steps $z_{t,i}$ and mapping them through a factor $A_t$ of $C_t$ as follows:
\begin{align}
  z_{t,i} &\sim \mathcal{N}(0,I_d),\quad i=1,\dots,\lambda, \label{eq:sampling_z}\\
  x_{t,i} &= m_t + \sigma_t A_t z_{t,i},\quad A_t A_t^\top = C_t. \label{eq:sampling_x}
\end{align}

The mean $m_t$ is updated according to the reordering of observed values. Let $\pi_t\in S_\lambda$ be a permutation such that
\begin{equation}
  y_{t,\pi_t(1)} \le y_{t,\pi_t(2)} \le \cdots \le y_{t,\pi_t(\lambda)},
  \label{eq:ranking_perm}
\end{equation}
with ties broken deterministically (e.g., by index). The rank of individual $i$ is
$r_{t,i} \in \{1,\dots,\lambda\}$ defined by $\pi_t(r_{t,i})=i$. Equivalently, for each candidate $i\in\{1,\dots,\lambda\}$ we define its rank by
\begin{equation}
  r_{t,i} := 1 + \sum_{j=1}^{\lambda} \mathbf{1}\!\left\{\, y_{t,j} < y_{t,i} \right\}
  +\sum_{j=1}^{\lambda} \mathbf{1}\!\left\{\, y_{t,j} = y_{t,i}, j < i \right\},
  \label{eq:rank_def}
\end{equation}
so that $r_{t,i}\in\{1,\dots,\lambda\}$ and $\pi_t(r_{t,i})=i$ (ties are broken by index). Then the mean is updated by
\begin{align}
  m_{t+1}
  &= m_t + \eta_m \sum_{i=1}^{\lambda} w(r_{t,i}) (x_{t,i}-m_t), \label{eq:mean_update}
\end{align}
where $w:\{1,\ldots,\lambda\}\to\R$ is a deterministic rank-weight map. For the mean update in standard positive-weight CMA-ES, these weights are nonnegative and normalized, $\sum_{j=1}^{\lambda}w(j)=1$. The default choice in modern CMA-ES implementations is the positive logarithmic recombination weight
\begin{equation}
  w(j)  =
  \frac{\max\{\log(\mu+1/2)-\log j,\,0\}}
  {\sum_{\ell=1}^{\mu}\big(\log(\mu+1/2)-\log \ell\big)},
  \qquad j=1,\ldots,\lambda.
  \label{eq:app_log_weights}
\end{equation}
Uniform top-$\mu$ truncation, $w(j)=\frac{1}{\mu}\ind\{j\le\mu\}$, is an older/intermediate-recombination special case and is used in this paper only when explicitly stated as a pedagogical example. Finally, the remaining state variables $(\sigma_t, C_t)$ are updated using the standard cumulative step-size and covariance adaptation rules \citep{HansenOstermeier2001CMA}.

Define the rank-weighted step in standardized coordinates
\begin{equation}
  z_{w,t} \;:=\; \sum_{i=1}^{\lambda} w(r_{t,i})\, z_{t,i}
  \;=\; \sum_{j=1}^{\lambda} w(j)\, z_{t,\pi_t(j)}.
  \label{eq:zw_def}
\end{equation}
The corresponding recombination point is
\begin{equation}
  x_{w,t} \;:=\; \sum_{j=1}^{\lambda} w(j)\, x_{t,\pi_t(j)}
  \;=\; m_t + \sigma_t A_t z_{w,t}.
  \label{eq:xw_def}
\end{equation}
Thus the mean update~\eqref{eq:mean_update} can be written equivalently as
\begin{equation}
  m_{t+1} \;=\; m_t + \eta_m\,(x_{w,t}-m_t)
  \;=\; m_t + \eta_m\,\sigma_t A_t z_{w,t}.
  \label{eq:mean_update_zw}
\end{equation}
Crucially, the update depends on the observed values only through their \emph{relative ordering}~\eqref{eq:ranking_perm};
under noisy evaluations, the ranks $r_{t,i}$ (and hence $z_{w,t}$) are random even conditional on the queried points.

\section{Residual Bootstrapping Implementation Details}
\label{app:rb_impl}

Implementing residual bootstrapping follows:

\textbf{Step 1: targeted reevaluation and robust residuals}. After one baseline evaluation $y_{t,i}$ for each candidate, we form the observed ranks $\hat r_{t,i}$ (Section~\ref{subsec:misranking}) and select a small reevaluation set $\mathcal{B}_t$ subject to the cap $K_t\le K_{\max}$. For uniform top-$\mu$ weights, the weight-instability criterion of Appendix~\ref{app:pool_theory} reduces to a narrow boundary band, for example $\hat r_{t,i}\in\{\mu-1,\mu,\mu+1\}$. For non-uniform monotone weights, the same criterion ranks candidates by bootstrap instability of $w(\tilde r_{t,i})$ and can include uncertain top-ranked candidates as well. We allocate $R_{t,i}$ additional independent reevaluations to each $i\in\mathcal{B}_t$ with $\sum_{i\in\mathcal{B}_t} R_{t,i} = K_t$ and $K_t\le K_{\max}$. Let $y_{t,i}^{(1)}:=y_{t,i}$ and $y_{t,i}^{(2)},\ldots,y_{t,i}^{(1+R_{t,i})}$ denote these samples. We compute a robust per-candidate median over reevaluated points,
\[
  \tilde m_{t,i} \;:=\; \mathrm{median}_{1\le k\le 1+R_{t,i}}\, y_{t,i}^{(k)},\qquad i\in\mathcal{B}_t,
\]
and residuals $\epsilon_{t,i}^{(k)} := y_{t,i}^{(k)}-\tilde m_{t,i}$.

\textbf{Step 2: denoised centering model $\hat f_t(\cdot)$ (cross-fitted)}. The bootstrap values should be centered at an estimate of the latent mean $f(x)$, not at the particular noisy draw $y_{t,i}$, otherwise the bootstrap targets a noise-convolved distribution. We fit a lightweight predictor $\hat f_t(\cdot)$ in standardized CMA-ES coordinates $z_{t,i}=(x_{t,i}-m_t)/\sigma_t$
using the same $m_t,\sigma_t$ as in sampling (Appendix~\ref{app:cmaes}),
with \emph{cross-fitting} so that $\hat f_t(x_{t,i})$ does not reuse $y_{t,i}$ when generating bootstrap values for candidate $i$.
A concrete instantiation is a two-fold ridge regression on pseudo-labels that use $\tilde m_{t,i}$ for $i\in\mathcal{B}_t$ and $y_{t,i}$ otherwise. See
full details of implementation in Appendix.

\textbf{Step 3: optional input-dependent noise scaling $\hat s_t(\cdot)$ and winsorization}.
To accommodate state-dependent noise scales, we optionally fit a simple scale model $\hat s_t(x)$ using only boundary points, for example, a two-parameter linear model in $|\hat f_t(x)|$, and we winsorize standardized residuals at a fixed threshold $M$. If input-dependent noise is weak or unknown, one may set $\hat s_t(\cdot)\equiv \hat s_t$ constant.

\textbf{Step 4: residual pool $\widehat D_t$ amortized across time}. We insert winsorized standardized residuals into a pool:
\[
  \hat z_{t,i}^{(k)} \;:=\; \clip\!\left(\frac{\epsilon_{t,i}^{(k)}}{\hat s_t(x_{t,i})},\,[-M,M]\right),\qquad i\in\mathcal{B}_t,
\]
and let $\widehat D_t$ denote the empirical distribution of all pooled residuals collected up to generation $t$. This amortization is the main reason why residual bootstrapping can run many bootstrap rankings without consuming additional calls.

\textbf{Step 5: bootstrap rankings and expected weights}.
Given $\hat f_t(\cdot)$, $\hat s_t(\cdot)$, and $\widehat D_t$, we generate synthetic noisy values
\begin{equation}
  \tilde y_{t,i}^{(b)} \;=\; \hat f_t(x_{t,i}) \;+\; \hat s_t(x_{t,i})\, z^{(b)}_{t,i},
  \quad z^{(b)}_{t,i}\stackrel{iid}{\sim}\widehat D_t.
  \label{eq:synthetic_y}
\end{equation}
For each bootstrap replicate $b=1,\ldots,B_{\text{boot}}$, we compute the induced ranks $\tilde r_{t,i}^{(b)}=\rank_i(\tilde y_{t,1:\lambda}^{(b)})$
and estimate expected rank weights by averaging:
\begin{equation}
  \hat w_{t,i} \;=\; \frac{1}{B_{\text{boot}}}\sum_{b=1}^{B_{\text{boot}}} w(\tilde r_{t,i}^{(b)}).
  \label{eq:weight_estimator}
\end{equation}
It only requires sorting synthetic values, since it is free evaluation-wise once $\widehat D_t$ is built.

In summary, residual bootstrapping provides averaging estimates $\hat w_{t,i}$ of the PEM target weights $w_{t,i}^\star$ \eqref{eq:pem_def} via bootstrap pseudo-rankings, which are then used as a drop-in replacement for $w(r_{t,i})$ in the mean update \eqref{eq:mean_update} and any other rank-weighted CMA-ES statistics. Crucially, this preserves the \emph{one-evaluation-per-candidate} baseline and spends the capped overhead $K_t$ only to calibrate the bootstrap noise model, yielding the per-generation evaluation cost $C_t=\lambda+K_t$ in \eqref{eq:rb_cost_recalled}. This design also cleanly separates \emph{evaluation-cost knobs}, which directly affect $C_t$ ($K_{\max}$ and the boundary bandwidth defining $\mathcal{B}_t$), from \emph{compute-only knobs} (the number of bootstrap replicates $B_{\text{boot}}$), which can be increased to stabilize $\hat w_{t,i}$ without changing the budget accounting in \eqref{eq:rb_cost_recalled}.

\section{Residual Pool Theory and Diagnostics}
\label{app:pool_theory}

\subsection{Fixed-Budget Condition (``Money Plot'' Prediction)}

\begin{proposition}
\label{prop:depth_under_overhead}
Let $B$ be the total number of allowed oracle calls and let the per-generation cost be $C_t=\lambda+K_t$ with $K_t\in[0,K_{\max}]$.
Then the number of completed generations $T$ satisfies
\begin{equation}
  \left\lfloor\frac{B}{\lambda+K_{\max}}\right\rfloor
  \;\le\;
  T
  \;\le\;
  \left\lfloor\frac{B}{\lambda}\right\rfloor.
  \label{eq:depth_bounds}
\end{equation}
In addition, if $(K_t)_{t\ge 0}$ is i.i.d.\ with $\E[K_t]<\infty$, then for large budgets one has the approximation
\begin{equation}
  \E[T]\;\approx\;\frac{B}{\lambda+\E[K_t]}.
  \label{eq:expected_depth_approx}
\end{equation}
\end{proposition}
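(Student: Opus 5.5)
The plan is to work directly from the definition of depth in \eqref{eq:depth_def}, $T=\max\{T:\sum_{t=0}^{T-1}C_t\le B\}$, and to sandwich the partial sums of costs. Since $\lambda\le C_t\le \lambda+K_{\max}$ for every $t$, we have
\[
  T\lambda\;\le\;\sum_{t=0}^{T-1}C_t\;\le\;T(\lambda+K_{\max}).
\]
For the upper bound, $T$ is feasible, so $\sum_{t<T}C_t\le B$, and the left inequality gives $T\lambda\le B$. Because $T$ is an integer, this yields $T\le\lfloor B/\lambda\rfloor$. For the lower bound, set $T_0:=\lfloor B/(\lambda+K_{\max})\rfloor$. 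The right inequality applied with $T_0$ in place of $T$ gives $\sum_{t<T_0}C_t\le T_0(\lambda+K_{\max})\le B$, so $T_0$ belongs to the feasible set and $T\ge T_0$ by maximality. This step implicitly assumes the sequence $(C_t)$ extends as far as needed, i.e.\ that the algorithm keeps running until the budget binds. I will state this explicitly, and also note that the feasible set is finite because $C_t\ge\lambda\ge1$, so the maximum exists.

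For the approximation \eqref{eq:expected_depth_approx}, I will treat $T$ as a renewal counting process. Let $S_n:=\sum_{t=0}^{n-1}C_t$ with i.i.d.\ increments of mean $\lambda+\E[K_t]$. Then $T=\max\{n:S_n\le B\}$, so $T+1=N(B)$, the first-passage time $\min\{n:S_n>B\}$, which is a stopping time. Wald's identity gives
\[
  \E[S_{T+1}]=(\lambda+\E[K_t])\,\E[T+1].
\]
Moreover, $B<S_{T+1}\le B+\lambda+K_{\max}$, since the overshoot is at most one increment. Combining the two,
\[
  \frac{B}{\lambda+\E[K_t]}-1\;<\;\E[T]\;\le\;\frac{B+\lambda+K_{\max}}{\lambda+\E[K_t]}-1,
\]
which gives $\E[T]=B/(\lambda+\E[K_t])+O(1)$. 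This is the stated approximation, with relative error $O(1/B)$. If $K_{\max}$ were not assumed, I would instead invoke the elementary renewal theorem, $\E[N(B)]/B\to1/(\lambda+\E[K_t])$.

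The only subtle step is the second part. To apply Wald's identity legitimately, I need to check that $N(B)$ is a stopping time with finite expectation; this follows from the boundedness $N(B)\le\lfloor B/\lambda\rfloor+1$. The rest is plain budget accounting.
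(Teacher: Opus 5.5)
Your deterministic bounds use the same budget accounting the paper calls ``immediate.'' You additionally, and correctly, use the maximality of $T$ for the lower bound, which the paper's one-line justification skips over.

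For the expectation, the two routes diverge at the point that matters. The paper calls $T=\max\{n:S_n\le B\}$ a stopping time and applies optional stopping to $M_n=S_n-n\mu$ at $T$, where $\mu:=\E[C_0]=\lambda+\E[K_t]$. This gives $\E[S_T]=\mu\,\E[T]$. Combining this with $S_T\le B<S_{T+1}$, the paper concludes $B/\mu-1<\E[T]\le B/\mu$.

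However, $\{T=n\}=\{S_n\le B<S_{n+1}\}$ depends on $C_n$. So $T$ is not a stopping time for $\sigma(C_0,\dots,C_{n-1})$; only $T+1$ is. The identity $\E[S_T]=\mu\E[T]$ and the upper bound $\E[T]\le B/\mu$ can actually fail. Take $\lambda=2$, $K_t\in\{0,18\}$ with equal probability, and $B=2$. Then $T=\ind\{K_0=0\}$, so $\E[T]=1/2$ and $\E[S_T]=1$, whereas $\mu\E[T]=11/2$ and $B/\mu=2/11$. The paper's lower bound survives, because it only needs Wald at $T+1$.

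Your argument applies Wald exactly where it is legitimate: at the first-passage time $T+1$, which is bounded by $\lfloor B/\lambda\rfloor+1$. You then pay for the overshoot with $S_{T+1}\le B+\lambda+K_{\max}$. This yields
$\E[T]\le (B+K_{\max}-\E[K_t])/(\lambda+\E[K_t])$. That bound is slightly weaker than the paper's claimed $B/\mu$, but it is true, and it still gives the stated $\E[T]=B/(\lambda+\E[K_t])+O(1)$. So your proof is correct, and it repairs a flaw in the paper's argument rather than just reproducing it.
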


\begin{proof}[Proof of Proposition~\ref{prop:depth_under_overhead}]
The bounds~\eqref{eq:depth_bounds} are immediate from $\lambda \le C_t \le \lambda+K_{\max}$ and
$\sum_{t=0}^{T-1}C_t\le B$.

For~\eqref{eq:expected_depth_approx}, assume $(C_t)_{t\ge 0}$ are i.i.d.\ with $\mu:=\E[C_0]<\infty$.
Define $S_n:=\sum_{t=0}^{n-1}C_t$ and the stopping time $T:=\max\{n:\,S_n\le B\}$.
Since $C_t\ge \lambda>0$, we have $T\le \lfloor B/\lambda\rfloor$, hence $T$ is bounded.
Let $M_n:=S_n-n\mu$. Then $(M_n)$ is a martingale, so by optional stopping,
$\E[M_T]=\E[M_0]=0$ and therefore $\E[S_T]=\mu\,\E[T]$ (Wald's identity \citep{Williams1991}).
Because $S_T\le B<S_{T+1}$, we get
$\mu\,\E[T]=\E[S_T]\le B$ and $B<\E[S_{T+1}]=\mu\,\E[T+1]=\mu(\E[T]+1)$,
so $B/\mu-1 < \E[T]\le B/\mu$.
With $\mu=\lambda+\E[K_t]$ in condition, this yields $\E[T]=B/(\lambda+\E[K_t])+O(1)$, proving
\eqref{eq:expected_depth_approx} for large $B$.
\end{proof}

\subsection{From Distribution Mismatch to PEM}

\begin{assumption}[Localized strong convexity with localization]
\label{ass:local_strong_convex}
For any fixed candidate set $x_{1:\lambda}$, random updated point
$X:=m+\Delta m(y)$ and its conditional mean $\bar X:=\E[X\mid x_{1:\lambda}] = m+\Delta m_{\mathrm{PEM}}$, there always exists a \emph{convex} set $\mathcal{C}\subset\R^d$ such that:
(i) $f$ is $\alpha$-strongly convex on $\mathcal{C}$, and
(ii) $X,\bar X\in\mathcal{C}$ almost surely (conditional on $x_{1:\lambda}$).
\end{assumption}

Note that Assumption~\ref{ass:local_strong_convex} does not require global convexity. It only requires that the random update and its conditional mean stay inside a region where $f$ has curvature.
A typical special case is: $f$ is $\alpha$-strongly convex on a ball $B(m,r)$ and the update is localized by design, for example, via step-size control or explicit clipping, so that $\|X-m\|\le r$ and $\|\bar X-m\|\le r$.

\begin{assumption}[Standing noise factorization]\label{ass: std_noise_factor}
Conditional on the candidate set and history, evaluation noise is independent across candidates and has a common standardized marginal distribution.
\end{assumption}

Concretely, we assume
\begin{equation}
  y_i \;=\; f(x_i) + s(x_i)\,\varepsilon_i,
  \qquad
  \varepsilon_i \stackrel{\text{i.i.d.}}{\sim} D_t,
  \label{eq:noise_model_theory}
\end{equation}
and the bootstrap draws satisfy $\hat\varepsilon_i \stackrel{\text{i.i.d.}}{\sim} \widehat D_t$.
If there is strong cross-candidate correlation or if the standardized shape depends sharply on $x$, then expected rank weights depend on the joint law of $(\varepsilon_1,\ldots,\varepsilon_\lambda)$ and a univariate residual pool is insufficient. 

\begin{assumption}[Anti-concentration of pairwise gaps]
\label{ass:anti_conc}
Conditional on $x_{1:\lambda}$, for each $i\neq j$ the true gap
$
G_{ij}:=y_i-y_j
$
admits a density $p_{ij}$ in a neighborhood of $0$ with
\begin{equation}
  \sup_{|u|\le 1} p_{ij}(u) \;\le\; L_{ij} \;<\;\infty.
  \label{eq:gap_density_bound}
\end{equation}
Equivalently, for either gap $H_{ij}\in\{G_{ij},\widetilde G_{ij}\}$ and all $\eta\in(0,1]$,
\begin{equation}
  \Pr\!\left(|H_{ij}|\le \eta \,\middle|\, x_{1:\lambda}\right)
  \;\le\;
  2\eta\,L_{ij}.
  \label{eq:anti_concentration_simple}
\end{equation}
\end{assumption}

\begin{proposition}
\label{prop:dist_to_rank_main}
Suppose both $D_t$ and $\widehat D_t$ are winsorized on $[-M,M]$. Then,
under Assumption~\ref{ass: std_noise_factor}~\eqref{eq:noise_model_theory} and Assumption~\ref{ass:anti_conc}, for any smoothing scale $\eta\in(0,1]$,
\begin{align}
\label{eq:stability_bound}
  |\widetilde w_i-w_i^\star|
  \le \frac{\|w\|_\infty}{\eta}\Big(\textstyle\sum_{j=1}^{\lambda} s(x_j)\Big)\,W_1(D_t,\widehat D_t)+4\Delta_w\,\eta\sum_{j\ne i}L_{ij}.
\end{align}
\end{proposition}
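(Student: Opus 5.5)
We couple the true and bootstrap noise vectors optimally and smooth the discontinuous rank map at scale $\eta$. Let $\pi$ be an optimal $W_1$ coupling of $D_t$ and $\widehat D_t$. Draw i.i.d.\ pairs $(\varepsilon_j,\hat\varepsilon_j)\sim\pi$, $j=1,\dots,\lambda$. Then $y_j=f(x_j)+s(x_j)\varepsilon_j$ and $\tilde y_j=f(x_j)+s(x_j)\hat\varepsilon_j$ live on one probability space with the correct marginals, by Assumption~\ref{ass: std_noise_factor}. Hence
\[
|\widetilde w_i-w_i^\star|\le \E\bigl[|w(\rank_i(\tilde y))-w(\rank_i(y))|\bigm| x_{1:\lambda}\bigr],
\]
and $\E|y_j-\tilde y_j|=s(x_j)W_1(D_t,\widehat D_t)$. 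Winsorization on $[-M,M]$ ensures all moments are finite, so $W_1$ is well defined.

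\textbf{Step 1 (rank change and the good/bad split).} Write $\rank_i(y)=1+\sum_{j\ne i}\ind\{G_{ij}>0\}$, up to tie-breaking, which is a null event under the density assumption. The two ranks can differ only if some pair $(i,j)$ flips sign, that is, $\ind\{G_{ij}>0\}\ne\ind\{\widetilde G_{ij}>0\}$. Such a flip requires either $|G_{ij}|\le\eta$ or $|\widetilde G_{ij}|\le\eta$ (near tie), or $|G_{ij}-\widetilde G_{ij}|>\eta$ (large perturbation).

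\textbf{Step 2 (near-tie term).} On the event that no pair has a large perturbation, each flipped pair is a near tie and shifts the rank by at most one. By telescoping $w$, the weight changes by at most $\Delta_w\sum_{j\ne i}\ind\{\text{pair } j \text{ flips}\}$. Assumption~\ref{ass:anti_conc}, applied to $G_{ij}$ and to $\widetilde G_{ij}$, gives
\[
\E\bigl[\ind\{|G_{ij}|\le\eta\}+\ind\{|\widetilde G_{ij}|\le\eta\}\bigr]\le 4\eta L_{ij}.
\]
Summing over $j\ne i$ yields the term $4\Delta_w\eta\sum_{j\ne i}L_{ij}$.

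\textbf{Step 3 (large-perturbation term).} Here the weight difference is crudely bounded by $2\|w\|_\infty$, or by $\|w\|_\infty$ using nonnegativity of the weights. Instead of a union bound over pairs, use the single event $E=\{\max_j|y_j-\tilde y_j|>\eta/2\}$. It covers every large pair perturbation, since $|G_{ij}-\widetilde G_{ij}|\le|y_i-\tilde y_i|+|y_j-\tilde y_j|$. By Markov's inequality,
\[
\Pr(E)\le \frac{2}{\eta}\sum_j \E|y_j-\tilde y_j|=\frac{2}{\eta}\Bigl(\sum_j s(x_j)\Bigr)W_1.
\]
This matches the first term of~\eqref{eq:stability_bound} up to the constant.

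\textbf{Main obstacle.} The hard part is the bookkeeping of constants: the factor 2 from Markov at threshold $\eta/2$, and the factor 2 from $|w-w'|\le 2\|w\|_\infty$. I would try to absorb them in one of two ways. The first is to use threshold $\eta$ with near-tie windows $|G_{ij}|\le 2\eta$; this moves the slack into the second term, and the constant 4 there looks designed for that. The second is to exploit nonnegative normalized weights, for which $|w(r)-w(r')|\le\|w\|_\infty$. A secondary subtlety is that anti-concentration is stated only for $|u|\le1$, which is why $\eta\in(0,1]$, and tie-breaking by index must be shown to occur with probability zero.
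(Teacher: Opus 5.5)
Your route is sound and genuinely different from the paper's, but as written it proves the first term only up to a factor $2$, and your option (a) does not close that gap. The one-line observation below does.

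\textbf{How the paper argues.} The paper never compares the two rank functionals pathwise. It mollifies: $g_{i,\eta}(u)=\E_\xi[w(\rank_i(u+\xi))]$ with $\xi$ uniform on $[-\eta/2,\eta/2]^\lambda$. A volume-shift estimate shows $g_{i,\eta}$ is $(\|w\|_\infty/\eta)$-Lipschitz in $\|\cdot\|_1$. Kantorovich--Rubinstein, with the same coordinatewise optimal coupling you use, then bounds $|\E g_{i,\eta}(Y)-\E g_{i,\eta}(\widetilde Y)|$. The smoothing bias $\Delta_w\sum_{j\ne i}\ind\{|G_{ij}|\le\eta\}$ is paid once on the $Y$ side and once on the $\widetilde Y$ side; that is where the paper's $2+2=4$ comes from. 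The volume-shift step tacitly uses that $w$ takes values in an interval of length $\|w\|_\infty$, i.e.\ nonnegative weights. So your option (b) is no extra assumption relative to the paper.

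\textbf{The constant.} Even with (b), your Step 3 gives $2\|w\|_\infty S_xW_1(D_t,\widehat D_t)/\eta$, where $S_x=\sum_j s(x_j)$. Option (a), with windows $|G_{ij}|\le2\eta$ and $|\widetilde G_{ij}|\le2\eta$ applied to both gaps as in your Step 2, costs $8\Delta_w\eta\sum_{j\ne i}L_{ij}$ and needs $2\eta\le1$. The missing observation is that your Step 1 trichotomy is loose by a factor of two. If the pair indicator changes while $|G_{ij}|>\eta$ and $|\widetilde G_{ij}|>\eta$, the gaps have strictly opposite signs, so $|G_{ij}-\widetilde G_{ij}|=|G_{ij}|+|\widetilde G_{ij}|>2\eta$. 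So take $E=\{\max_j|y_j-\tilde y_j|>\eta\}$:
\begin{itemize}
\item on $E^c$, every flip is a near tie at window $\eta$, so your Step 2 goes through unchanged with constant $4$;
\item the union bound plus Markov give $\Pr(E)\le S_xW_1/\eta$.
\end{itemize}
With (b), this yields exactly the stated inequality for all $\eta\in(0,1]$. Your tie-breaking worry is moot. Under any fixed tie convention, a change in the pairwise indicator needs either a zero gap, which is already a near tie, or strictly opposite signs.

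\textbf{What each route buys.} Your route is more elementary: no mollifier and no duality. It is also more flexible about where near ties are charged. Any flip forces $|G_{ij}|\le|G_{ij}-\widetilde G_{ij}|\le|y_i-\tilde y_i|+|y_j-\tilde y_j|$, so on $E^c$ a flip implies $|G_{ij}|\le2\eta$. For $\eta\le1/2$ you can therefore charge everything to the true gaps and get the same bound with no anti-concentration for $\widetilde G_{ij}$. That matters, because $\widehat D_t$ is an atomic empirical pool, and a density bound on the bootstrap gaps is questionable. The paper's smoothing argument must pay smoothing bias on both sides, so it needs the assumption for both gaps. Its advantage is a clean separation of a deterministic approximation error from a single ``Lipschitz times $W_1$'' transport bound.
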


The second term in~\eqref{eq:stability_bound} scales with the mass of pairwise gaps near $0$.
It is small when candidate scores are well separated (large $|f(x_i)-f(x_j)|$ relative to noise), or when the gap distribution has a moderate density bound near $0$.
The statement is already for a general deterministic $w$: the weight map enters only through $\|w\|_\infty$ in the smooth mismatch term and through the adjacent-drop scale $\Delta_w$ in the near-tie term.

\paragraph{Specialization to standard logarithmic CMA-ES weights.}
For~\eqref{eq:app_log_weights}, let
$Z_\mu:=\sum_{\ell=1}^{\mu}(\log(\mu+1/2)-\log \ell)$.
Then
\begin{align}
  \delta_k &= \frac{\log(1+1/k)}{Z_\mu}, && k=1,\ldots,\mu-1,\label{eq:log_delta_inner}\\
  \delta_\mu &= \frac{\log(1+1/(2\mu))}{Z_\mu},\qquad
  \delta_k=0, && k>\mu. \label{eq:log_delta_boundary}
\end{align}
Since $Z_\mu=\mu+O(\log\mu)$, we have
\begin{equation}
  \|w\|_\infty=\Theta\!\left(\frac{\log\mu}{\mu}\right),
  \qquad
  \Delta_w=\delta_1=\Theta\!\left(\frac{1}{\mu}\right).
  \label{eq:log_weight_scales}
\end{equation}
Substituting into Proposition~\ref{prop:dist_to_rank_main}, and writing $S_x:=\sum_{j=1}^{\lambda}s(x_j)$ and $A_i:=\sum_{j\ne i}L_{ij}$, gives
\begin{equation}
  |\widetilde w_i-w_i^\star|
  \le
  C_1\frac{\log\mu}{\mu}\frac{S_x}{\eta}W_1(D_t,\widehat D_t)
  + C_2\frac{\eta}{\mu}A_i
  \label{eq:log_weight_stability_bound}
\end{equation}
for absolute constants $C_1,C_2>0$. Hence Proposition~\ref{prop:dist_to_rank_main} does not rely on uniform truncation: logarithmic weights preserve the same two-term structure, with a mild logarithmic factor only in the Wasserstein mismatch term. Uniform top-$\mu$ weights instead have $\|w\|_\infty=\Delta_w=1/\mu$ and concentrate all adjacent-drop mass at $k=\mu$.

\paragraph{Weight-aware targeted reevaluation.}\label{par:weight_instability}
The same multi-cutoff representation gives a natural general-weight version of the boundary set in Appendix~\ref{app:rb_impl}. Conditional on the fitted bootstrap model, define
\begin{equation}
  U_{t,i} := \mathrm{Var}\!\left(w(\tilde r_{t,i})\mid x_{1:\lambda},\hat f_t,\hat s_t,\widehat D_t\right),
  \label{eq:weight_instability}
\end{equation}
and let $\tilde p_{t,i,k}:=\Pr(\tilde r_{t,i}\le k\mid x_{1:\lambda},\hat f_t,\hat s_t,\widehat D_t)$.
Since the events $\{\tilde r_{t,i}\le k\}$ are nested,
\begin{equation}
  U_{t,i}
  =\sum_{k=1}^{\lambda-1}\sum_{\ell=1}^{\lambda-1}
  \delta_k\delta_\ell
  \left(\tilde p_{t,i,\min(k,\ell)}-\tilde p_{t,i,k}\tilde p_{t,i,\ell}\right).
  \label{eq:weight_instability_identity}
\end{equation}
For uniform top-$\mu$ truncation this reduces to
$U_{t,i}=\mu^{-2}\tilde p_{t,i,\mu}(1-\tilde p_{t,i,\mu})$, maximized near the elite boundary. For logarithmic weights, multiple $\delta_k$ are positive and largest near $k=1$, so uncertainty among highly ranked candidates contributes directly. Selecting reevaluations by the largest $U_{t,i}$ is therefore a weight-aware extension of the boundary heuristic; it recovers the original boundary rule in the one-cutoff case and expands the set toward top-ranked candidates when logarithmic weights are used.

\begin{proof}[Proof of Proposition~\ref{prop:dist_to_rank_main}]
Step 1 (smoothing). Let $\xi\in\R^\lambda$ have i.i.d.\ coordinates $\xi_k\sim\mathrm{Unif}[-\eta/2,\eta/2]$.
Define the smoothed functional
$
g_{i,\eta}(u):=\E_\xi\!\left[w\!\left(\rank_i(u+\xi)\right)\right]
$
for $u\in\R^\lambda$.
This is the average of the bounded function $u\mapsto w(\rank_i(u))$ over an $\ell_\infty$ cube of side length $\eta$.
Therefore $g_{i,\eta}$ is $(\|w\|_\infty/\eta)$-Lipschitz w.r.t.\ $\|\cdot\|_1$:
shifting the cube by $\delta$ changes at most a $\|\delta\|_1/\eta$ fraction of its volume, so the cube-averaged value can change by at most $\|w\|_\infty\|\delta\|_1/\eta$.

Step 2 (Wasserstein control of the smoothed mismatch).
Let $Y:=(y_1,\ldots,y_\lambda)$ and $\widetilde Y:=(\tilde y_1,\ldots,\tilde y_\lambda)$.
By Kantorovich--Rubinstein duality \citep{Villani2008},
\begin{align*}
\big|\E[g_{i,\eta}(Y)\mid x_{1:\lambda}] - \E[g_{i,\eta}(\widetilde Y)\mid x_{1:\lambda}]\big|\le
\mathrm{Lip}(g_{i,\eta})\cdot W_1(\mathcal L(Y\mid x_{1:\lambda}),\mathcal L(\widetilde Y\mid x_{1:\lambda})),
\end{align*}
where $W_1$ on $\R^\lambda$ uses cost $\|u-v\|_1$.
Under~\eqref{eq:noise_model_theory}, couple each $(\varepsilon_i,\hat\varepsilon_i)$ using an optimal 1D coupling achieving $W_1(D_t,\widehat D_t)$ and couple coordinates independently.
Then
\begin{align*}
\E\left[\|Y-\widetilde Y\|_1\mid x_{1:\lambda}\right]
=
\sum_{j=1}^{\lambda} s(x_j)\,\E\left[|\varepsilon_j-\hat\varepsilon_j|\right]
=
\Big(\sum_{j=1}^{\lambda} s(x_j)\Big)\,W_1(D_t,\widehat D_t),
\end{align*}
so $W_1(\mathcal L(Y\mid x_{1:\lambda}),\mathcal L(\widetilde Y\mid x_{1:\lambda}))$ is bounded by the same quantity.

Step 3 (smoothing bias is a near-tie term).
If $|G_{ij}|>\eta$ for all $j\neq i$, then adding $\xi\in[-\eta/2,\eta/2]^\lambda$ cannot change the sign of any pairwise gap involving $i$, hence it cannot change $\rank_i$.
Therefore
$
w(\rank_i(Y))\neq w(\rank_i(Y+\xi))
$
is only possible if $|G_{ij}|\le \eta$ for some $j\neq i$.
Moreover, each adjacent rank crossing changes $w(\rank_i)$ by at most $\Delta_w$; summing over possible crossings involving $i$ gives a bound that is independent of the particular shape of $w$.
Thus,
\[
\big|w(\rank_i(Y)) - g_{i,\eta}(Y)\big|
\le
\Delta_w \sum_{j\neq i}\ind\{|G_{ij}|\le \eta\},
\]
and likewise for $\widetilde Y$.
Taking expectations and applying~\eqref{eq:anti_concentration_simple} yields the second term in~\eqref{eq:stability_bound}.
\end{proof}

\subsection{Adaptive Residual Pool Concentration}
\begin{proposition}
\label{prop:adaptive_pool_main}
Let $\{\widehat z_n\}_{n=1}^{N_t}$ be the standardized residuals stored in the pool up to time $t$, with $|\widehat z_n|\le M$ winsorized.
Let $\mathcal F_n$ denote the filtration revealing the history up to the creation of $\widehat z_n$.
Assume conditional independence in the sense that $\widehat z_n$ is independent of $\{\widehat z_m:m<n\}$ given $\mathcal F_{n-1}$.
Define the \emph{path-average} distribution
$\overline D_t := \frac{1}{N_t}\sum_{n=1}^{N_t}\mathcal L(\widehat z_n\mid\mathcal F_{n-1})$
and the empirical pool distribution $\widehat D_t := \frac{1}{N_t}\sum_{n=1}^{N_t}\delta_{\widehat z_n}$.
Then there exists a constant $C>0$ such that, for any $\delta\in(0,1)$,
\begin{equation}
  W_1(\widehat D_t,\overline D_t)\le
  C\,M\,\sqrt{\frac{\log(1/\delta)}{N_t}}
  \label{eq:pool_concentration_main}
\end{equation}
with probability at least $1-\delta$.
\end{proposition}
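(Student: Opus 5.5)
The plan is to use the one-dimensional CDF representation of $W_1$ together with a Hilbert-space martingale concentration inequality. This avoids both a union bound over a grid and any covering of the class of Lipschitz test functions, which is why the final rate has no $\log N_t$ factor.

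\textbf{Step 1 (reduction to a function-valued martingale).} All $\widehat z_n$ lie in $[-M,M]$, and so does the support of every conditional law $\mathcal L(\widehat z_n\mid\mathcal F_{n-1})$. Hence both $\widehat D_t$ and $\overline D_t$ are supported on $[-M,M]$, and the one-dimensional identity gives
\[
W_1(\widehat D_t,\overline D_t)=\int_{-M}^{M}\bigl|\widehat F_t(u)-\overline F_t(u)\bigr|\,du=\frac{1}{N_t}\int_{-M}^{M}|S_{N_t}(u)|\,du .
\]
Here $S_k(u):=\sum_{n\le k}\xi_n(u)$ and $\xi_n(u):=\ind\{\widehat z_n\le u\}-\Pr(\widehat z_n\le u\mid\mathcal F_{n-1})$. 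The conditional-independence assumption, together with the fact that $\widehat z_n$ is revealed by $\mathcal F_n$, gives $\E[\xi_n\mid\mathcal F_{n-1}]=0$. Therefore $(S_k)$ is a martingale taking values in the Hilbert space $H:=L^2([-M,M])$. If $N_t$ is random, I will treat it as a stopping time bounded by the pool capacity, or else condition on it.

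\textbf{Step 2 (Cauchy--Schwarz to pass to $H$).} Since $[-M,M]$ has length $2M$,
\[
\int_{-M}^{M}|S_{N_t}(u)|\,du\le\sqrt{2M}\,\|S_{N_t}\|_H .
\]
Each increment satisfies $|\xi_n(u)|\le 1$ pointwise, so $\|\xi_n\|_H\le\sqrt{2M}$ almost surely.

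\textbf{Step 3 (concentration).} I apply Pinelis' Azuma--Hoeffding inequality for martingales in $(2,1)$-smooth Banach spaces; Hilbert spaces qualify. With increments bounded by $c=\sqrt{2M}$, it gives, with probability at least $1-\delta$,
\[
\|S_{N_t}\|_H\le C_0\sqrt{2M}\,\sqrt{N_t\log(2/\delta)} .
\]
Combining this with Steps 1 and 2,
\[
W_1\le \frac{\sqrt{2M}\,C_0\sqrt{2M}\sqrt{N_t\log(2/\delta)}}{N_t}= 2C_0\,M\sqrt{\log(2/\delta)/N_t}.
\]
Absorbing the $2$ inside the logarithm into $C$ (valid for $\delta\in(0,1)$, perhaps after treating $\delta$ near $1$ trivially via $W_1\le 2M$) yields \eqref{eq:pool_concentration_main}.

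\textbf{Main obstacle.} The delicate point is Step 3. I need a vector-valued martingale bound that carries no dimension or covering penalty, and the increments are not independent because the pool is adaptive. Pinelis' theorem handles exactly this case, and is the reason for passing to $L^2$ rather than working with the $L^1$ norm directly. If I wanted to avoid citing it, the fallback would be a Doob-martingale/McDiarmid argument applied to $\|S_{N_t}\|_H$, combined with $\E\|S_{N_t}\|_H^2=\sum_n\E\|\xi_n\|_H^2\le 2MN_t$ from orthogonality of the increments. I will also check that measurability of $u\mapsto\xi_n(u)$ and the requirement that the conditional laws be supported in $[-M,M]$ both follow from the winsorization.
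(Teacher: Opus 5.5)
Your proof is correct, and it takes a genuinely different route from the paper's.

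\textbf{How the two routes differ.} The paper works through Kantorovich--Rubinstein duality. For each fixed 1-Lipschitz $\varphi$ with $\varphi(0)=0$, it applies scalar Azuma--Hoeffding to the differences $\varphi(\widehat z_n)-\E[\varphi(\widehat z_n)\mid\mathcal F_{n-1}]$, which are bounded by $2M$. It then passes to the supremum over the Lipschitz class with a single-scale $\varepsilon$-net in $\|\cdot\|_\infty$, a union bound, and $\varepsilon\asymp N_t^{-1/2}$. You instead use the one-dimensional CDF formula, treat the centered empirical CDF as a martingale in $L^2([-M,M])$, and invoke Pinelis' dimension-free Azuma inequality. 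The $\sqrt{2M}$ lost in Cauchy--Schwarz pairs with the $\sqrt{2M}$ increment bound to give exactly the factor $M$. The explicit constant is $2\sqrt2$ in front of $M\sqrt{\log(2/\delta)/N_t}$.

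\textbf{What each buys.} Your route actually delivers the $N_t^{-1/2}$ rate. A sup-norm $\varepsilon$-net of 1-Lipschitz functions on $[-M,M]$ has log-cardinality of order $M/\varepsilon$ (Kolmogorov--Tikhomirov). At $\varepsilon\asymp N_t^{-1/2}$ the union bound therefore adds a term of order $M\sqrt{N_t}$ to $\log(1/\delta)$. The paper's sketch then only yields order $M^{3/2}N_t^{-1/4}$, or at best $MN_t^{-1/3}$ after optimizing $\varepsilon$. Repairing it would need chaining (Dudley's integral converges here) plus a tail bound for the supremum. In exchange, the duality route is a template that does not rely on the real line. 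Yours uses one-dimensionality essentially, which is all this proposition needs.

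\textbf{Three small remarks.}

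First, $\E[\xi_n\mid\mathcal F_{n-1}]=0$ holds by definition of the conditional CDF, so the conditional-independence hypothesis is never used.

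Second, your patch for $\delta$ near $1$ does not work. The right-hand side of the claim tends to $0$ as $\delta\to1$, so the fixed bound $W_1\le 2M$ cannot rescue it. In fact the claim as stated is false there. Take $N_t=1$ and $\widehat z_1=\pm M$ with probability $1/2$ each, independently of $\mathcal F_0$. Then $W_1(\widehat D_t,\overline D_t)=M$ surely, which violates the bound for every $\delta>e^{-1/C^2}$. The correct conclusion is your bound with $\log(2/\delta)$. Alternatively, keep the stated form but restrict to $\delta\le 1/2$, where $\log(2/\delta)\le 2\log(1/\delta)$ and $C=4$ works. The paper's Azuma step carries the same factor $2$ and the same issue.

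Third, treating a random $N_t$ as a stopping time bounded by the capacity would make Pinelis' bound scale with the capacity rather than with $N_t$. Moreover, an adaptively chosen $N_t$ can violate any bound of the form $\sqrt{\log(1/\delta)/N_t}$ with probability close to $1$, by the law of the iterated logarithm. So take $N_t$ deterministic or independent of the residual values, as the paper implicitly does.
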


\begin{proof}[Proof of Proposition~\ref{prop:adaptive_pool_main}]
We use the Kantorovich--Rubinstein dual representation \citep{Villani2008}:
\begin{align*}
W_1(\widehat D_t,\overline D_t)
=\sup_{\mathrm{Lip}(\varphi)\le 1}
\left|
\frac{1}{N_t}\sum_{n=1}^{N_t}\varphi(\widehat z_n)-
\frac{1}{N_t}\sum_{n=1}^{N_t}\E[\varphi(\widehat z_n)\mid\mathcal F_{n-1}]\right|.
\end{align*}
Fix a 1-Lipschitz $\varphi$ with $\varphi(0)=0$.
Then $|\varphi(\widehat z_n)|\le M$ and
\[
d_n := \varphi(\widehat z_n)-\E[\varphi(\widehat z_n)\mid\mathcal F_{n-1}]
\]
is a bounded martingale difference with $|d_n|\le 2M$.
Azuma--Hoeffding (\citep{Azuma1967}, \citep{Hoeffding1963}) therefore yields, for any $t>0$,
\[
\Pr\!\left(\left|\frac{1}{N_t}\sum_{n=1}^{N_t} d_n\right|\ge t\right)
\le 2\exp\!\left(-\frac{N_t t^2}{8M^2}\right).
\]
To pass from a fixed $\varphi$ to the supremum over all 1-Lipschitz functions, cover the unit-Lipschitz class on $[-M,M]$ by an $\varepsilon$-net in $\|\cdot\|_\infty$ (piecewise-linear interpolation on a uniform grid suffices) and apply a union bound; see, e.g., \citet{VanDerVaartWellner1996,Boucheron2013}.
Choosing $\varepsilon$ on the order of $1/\sqrt{N_t}$ gives~\eqref{eq:pool_concentration_main}.
\end{proof}

\subsection{A Drift-Aware Mismatch Decomposition (Four Observable Terms)}

\begin{proposition}
\label{prop:pool_mismatch_decomp_main}
Assume winsorization at level $M$.
Let $\widehat D_t$ and $\overline D_t$ be the empirical pool and path-average distributions from Proposition~\ref{prop:adaptive_pool_main}.
For each stored residual index $n\in\{1,\ldots,N_t\}$, let $\tau_n$ denote the generation at which $\widehat z_n$ was created (so $\tau_n$ is $\mathcal F_{n-1}$-measurable),
and define the corresponding \emph{ideal} (oracle) standardized residual
\begin{equation}
  z_n^\star
  \;:=\;
  \clip\!\left(\frac{y_n-f(x_n)}{s(x_n)},\,[-M,M]\right),
  \label{eq:ideal_residual_def}
\end{equation}

with conditional law $Q_n:=\mathcal L(z_n^\star\mid \mathcal F_{n-1})$. Let the ideal path-average distribution be $\overline D_t^\star := \frac{1}{N_t}\sum_{n=1}^{N_t}Q_n$.
Let $D_\tau$ denote a \emph{per-generation} target law for ideal residuals at generation $\tau$ (e.g., the conditional law of~\eqref{eq:ideal_residual_def} when $x$ is drawn from the boundary set at generation $\tau$).
Then for any reference generation $t$,
\begin{align}
  W_1(\widehat D_t,D_t)
  \le
  \underbrace{W_1(\widehat D_t,\overline D_t)}_{\text{(I) finite pool}}
  +
  \underbrace{\mathrm{StdErr}_t}_{\text{(II) standardization}}
  +  \underbrace{\kappa_t^{\mathrm{shape}}}_{\text{(III) shape shift}}
  +
  \underbrace{\mathrm{Drift}_t}_{\text{(IV) drift}},
  \label{eq:pool_mismatch_decomp_main}
\end{align}
where
\begin{align}
  \mathrm{StdErr}_t
  &:= W_1(\overline D_t,\overline D_t^\star),
  \label{eq:stderr_def}\\
  \kappa_t^{\mathrm{shape}}
  &:= \frac{1}{N_t}\sum_{n=1}^{N_t} W_1(Q_n, D_{\tau_n}),
  \label{eq:shape_def}\\
  \mathrm{Drift}_t
  &:= \frac{1}{N_t}\sum_{n=1}^{N_t} W_1(D_{\tau_n}, D_t).
  \label{eq:drift_def}
\end{align}
\end{proposition}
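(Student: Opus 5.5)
The plan is to chain the triangle inequality for $W_1$ through the intermediate distributions $\overline D_t$, $\overline D_t^\star$, and the mixture $\frac{1}{N_t}\sum_n D_{\tau_n}$. Concretely, we write
\begin{align*}
W_1(\widehat D_t,D_t)\le\;& W_1(\widehat D_t,\overline D_t)+W_1(\overline D_t,\overline D_t^\star)\\
&+W_1\Big(\overline D_t^\star,\tfrac{1}{N_t}\textstyle\sum_{n}D_{\tau_n}\Big)+W_1\Big(\tfrac{1}{N_t}\textstyle\sum_{n}D_{\tau_n},D_t\Big).
\end{align*}
The first two terms are exactly (I) and (II) by definition, so nothing needs to be done for them. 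All distributions are supported on $[-M,M]$ because of winsorization, so every $W_1$ above is finite and the triangle inequality applies.

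For the last two terms, the key tool is joint convexity of $W_1$ under mixtures with common weights. If $P=\frac1N\sum_n P_n$ and $R=\frac1N\sum_n R_n$, then $W_1(P,R)\le\frac1N\sum_n W_1(P_n,R_n)$. We prove this by taking an optimal coupling $\gamma_n$ of each pair $(P_n,R_n)$ and using $\frac1N\sum_n\gamma_n$ as a coupling of $(P,R)$. The same bound follows alternatively from Kantorovich--Rubinstein duality, since a supremum of averages is at most the average of suprema.

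Applying this to $\overline D_t^\star=\frac1{N_t}\sum_n Q_n$ versus $\frac1{N_t}\sum_n D_{\tau_n}$ gives $\kappa_t^{\mathrm{shape}}$. Applying it to $\frac1{N_t}\sum_n D_{\tau_n}$ versus $D_t=\frac1{N_t}\sum_n D_t$ gives $\mathrm{Drift}_t$. Summing the four bounds yields \eqref{eq:pool_mismatch_decomp_main}.

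The only delicate point is measurability and conditioning. The laws $Q_n$ and $D_{\tau_n}$ are random, depending on $\mathcal F_{n-1}$ and $\tau_n$, so each inequality should be read pathwise, for a fixed realization of the history. We then need to check that the mixture-convexity argument holds realization by realization. It does, because the weights $1/N_t$ are deterministic given $N_t$ and each component law is fixed once we condition on the path. I expect no genuine obstacle here beyond stating this pathwise interpretation clearly. If $N_t$ is itself random, the statement still holds on each path because the decomposition is an almost-sure inequality.
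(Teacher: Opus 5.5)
Your proof is correct and follows essentially the same route as the paper: both use only the triangle inequality for $W_1$ and convexity of $W_1$ under equal-weight mixtures. The one cosmetic difference is how the last term is split. You insert the intermediate mixture $\frac{1}{N_t}\sum_n D_{\tau_n}$ and apply joint convexity. The paper instead first bounds $W_1(\overline D_t^\star, D_t)\le \frac{1}{N_t}\sum_n W_1(Q_n,D_t)$ and then applies the triangle inequality through $D_{\tau_n}$ term by term.
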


\begin{proof}[Proof of Proposition~\ref{prop:pool_mismatch_decomp_main}]
Apply the triangle inequality twice:
\begin{align*}
W_1(\widehat D_t,D_t)
&\le W_1(\widehat D_t,\overline D_t) + W_1(\overline D_t, D_t) \\
&\le W_1(\widehat D_t,\overline D_t) + W_1(\overline D_t,\overline D_t^\star) + W_1(\overline D_t^\star, D_t).
\end{align*}
This yields term (I) and the definition of $\mathrm{StdErr}_t$ in~\eqref{eq:stderr_def}.
To bound the remaining term, use convexity of $W_1$ in its first argument (mixture subadditivity):
\begin{align*}
W_1(\overline D_t^\star, D_t)
=
W_1\!\left(\frac{1}{N_t}\sum_{n=1}^{N_t}Q_n,\,D_t\right)\le
\frac{1}{N_t}\sum_{n=1}^{N_t} W_1(Q_n, D_t).
\end{align*}
For each $n$, applying the triangle inequality $W_1(Q_n,D_t)\le W_1(Q_n,D_{\tau_n})+W_1(D_{\tau_n},D_t)$ and average gives~\eqref{eq:shape_def}--\eqref{eq:drift_def}.
\end{proof}

To make $\mathrm{StdErr}_t$ explicit, suppose the stored residual uses estimated center/scale, $\widehat z_n=\clip((y_n-\widehat f(x_n))/\widehat s(x_n),[-M,M])$,
while the ideal residual is~\eqref{eq:ideal_residual_def}.
Since $\clip(\cdot,[-M,M])$ is 1-Lipschitz and $W_1(P,Q)\le \E|X-Y|$ for any coupling $(X,Y)$ of $P,Q$,
one convenient bound is
\begin{align}
\mathrm{StdErr}_t
&\le\;
\frac{1}{N_t}\sum_{n=1}^{N_t}
\E\!\left[
|\alpha_n|+
\left|\beta_n\right|\cdot |\varepsilon_n|
\;\middle|\;
\mathcal F_{n-1}
\right],
\label{eq:stderr_explicit_bound}
\end{align}
where $y_n=f(x_n)+s(x_n)\varepsilon_n$, $\alpha_n=\frac{f(x_n)-\widehat f(x_n)}{\widehat s(x_n)}$ and $\beta_n=\frac{s(x_n)-\widehat s(x_n)}{\widehat s(x_n)}$. Under winsorization $|\varepsilon_n|\le M$, the second term is bounded by $M\,|s-\widehat s|/\widehat s$.

\subsection{Assumptions and Diagnostics}
\label{app:diagnostics_map}

We keep the assumptions explicit and map each to a simple diagnostic, so the theory is falsifiable. See table~\ref{tab:app_experiment_roadmap} for details.

\begin{table}[H]
  \centering
  \caption{Diagnostics toolbox: mapping modeling assumptions to loggable statistics.}
  \label{tab:diagnostics_map}
  \small
  \begin{tabular}{@{}p{0.27\textwidth}p{0.33\textwidth}p{0.34\textwidth}@{}}
    \toprule
    Assumption / term & Diagnostic (loggable) & Failure mode indicator \\
    \midrule
    Conditional independence (or weak cross-candidate correlation) & Cross-candidate correlation of standardized residuals within a generation & Strong common random numbers / shared shocks \\
    Heavy tails handled by winsorization ($|\widehat z|\le M$) & Clipping saturation rate $\Pr(|\widehat z|=M)$ & If large, analysis applies only to winsorized noise \\
    Finite pool size error & Pool size $N_t$ (and bootstrap samples $B$) & Small $N_t$ yields high Wasserstein-1 ($W_1$) estimation error \\
    Shape / covariate shift & Bucket by $|\widehat m|$ and compare buckets via KS / $W_1$ & Strong state-dependent noise shape not captured by standardization \\
    Drift / nonstationarity & Sliding-window $W_1$ between recent pool slices & Rapid drift breaks pooling across time \\
    Standardization error & Split-median center stability; held-out scale CV & Poor center/scale estimates induce systematic bias \\
    Anti-concentration (few near ties) & Mass of pairwise gaps near $0$ & Near-ties make rank-based updates unstable \\
    \bottomrule
  \end{tabular}
\end{table}

\section{Decision-Theoretic Analysis for Probe-and-Switch}
\label{app:decision}


In practice, before optimization (or at a designated checkpoint), we sample a probe population from the current search distribution, evaluate each probe point twice
with independent oracle noise, and compute a rank-disagreement statistic:
\begin{equation}
  P \;:=\; \frac{1}{\lambda^2}\sum_{i=1}^{\lambda}\left| r_i^{(a)} - r_i^{(b)} \right|
  \;\in\;[0,1/2].
  \label{eq:probe_statistic}
\end{equation}
Here, $r_i^{(a)}$ and $r_i^{(b)}$ are the ranks of the same candidate under the two independent evaluation vectors. The probe consumes $2\lambda$ evaluations, reducing the remaining optimization budget to $B-2\lambda$. Then, given a threshold $\tau\in[0,1/2]$, we run RB-PEM if $P\ge \tau$ and otherwise run standard CMA-ES (with one evaluation per candidate and no bootstrap).
This policy is explicitly budgeted: it pays a fixed probe cost up front, and it pays the RB-PEM overhead $K_t$ only in regimes where the probe indicates that
intra-generation ranking uncertainty is substantial.

After paying a probe cost of $2\lambda$ evaluations, we must choose one of two actions
$a\in\{0,1\}$ for the remaining budget: $a=0$ runs CMA-ES, and $a=1$ runs RB-PEM.
Let $L_a$ denote the random loss (e.g., final noise-free best value) obtained by action $a$
under the remaining budget, and let $P\in[0,1/2]$ be the probe statistic
(Eq.~\eqref{eq:probe_statistic}).
A decision rule is any measurable map $\pi:[0,1/2]\to\{0,1\}$, and its risk is
$\mathcal R(\pi)=\E[L_{\pi(P)}]$.

\begin{proposition}
\label{prop:threshold_optimality}
Among all decision rules that depend only on $P$, the Bayes-optimal rule is
\begin{equation}
  \pi^\star(p)=\mathbf 1\{\Delta(p)\ge 0\}.
\end{equation}
If $\Delta(\cdot)$ is continuous and has a single crossing, i.e., there exists $p^\star$ such that
$\Delta(p)\le 0$ for $p\le p^\star$ and $\Delta(p)\ge 0$ for $p\ge p^\star$,
then $\pi^\star$ is a threshold rule $\pi^\star(p)=\mathbf 1\{p\ge p^\star\}$.
\end{proposition}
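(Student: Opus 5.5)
The plan is to reduce the risk of an arbitrary rule to pointwise minimization by conditioning on $P$, which is the standard Bayes-decision argument. For any measurable $\pi:[0,1/2]\to\{0,1\}$, the tower property gives
\begin{equation*}
  \mathcal R(\pi)=\E\bigl[\E[L_{\pi(P)}\mid P]\bigr]
  =\E\bigl[\ind\{\pi(P)=0\}\,\ell_0(P)+\ind\{\pi(P)=1\}\,\ell_1(P)\bigr],
\end{equation*}
where $\ell_a(p):=\E[L_a\mid P=p]$ is a fixed version of the regular conditional expectation. I will assume $L_0,L_1$ are integrable so that these conditional expectations exist; for instance, the final loss is bounded, or at least integrable on the relevant domain. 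Because $\pi(P)$ is $\sigma(P)$-measurable, it can be pulled out of the inner conditional expectation. This step is what justifies writing the integrand as a function of $p$ alone.

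The next step is the pointwise comparison. For each $p$, we have $\min\{\ell_0(p),\ell_1(p)\}\le \ell_{\pi(p)}(p)$. The rule $\pi^\star(p)=\ind\{\Delta(p)\ge 0\}$, with $\Delta(p)=\ell_0(p)-\ell_1(p)$, attains this minimum: it chooses $a=1$ exactly when $\ell_1(p)\le \ell_0(p)$. Ties ($\Delta=0$) can be broken either way without changing the risk. The rule $\pi^\star$ is measurable because $\Delta$ is a measurable function of $p$. Integrating the pointwise inequality against the law of $P$ gives $\mathcal R(\pi^\star)\le\mathcal R(\pi)$ for every $\pi$, which establishes Bayes optimality. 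I will also note that any version of $\Delta$ gives the same risk, so optimality holds up to $P$-null sets.

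For the threshold claim, I will use the single-crossing hypothesis directly. For $p>p^\star$ we have $\Delta(p)\ge0$, hence $\pi^\star(p)=1$. For $p<p^\star$ we have $\Delta(p)\le 0$. Where $\Delta(p)<0$, $\pi^\star(p)=0$. Where $\Delta(p)=0$, either action is optimal, so redefining $\pi^\star(p)=0$ there leaves the risk unchanged. At $p=p^\star$, continuity together with the two one-sided inequalities forces $\Delta(p^\star)=0$, so setting $\pi^\star(p^\star)=1$ is consistent. The threshold rule $\ind\{p\ge p^\star\}$ therefore agrees with a Bayes-optimal rule except possibly where $\Delta=0$, and on that set the two actions have equal conditional risk. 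It is thus Bayes-optimal.

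The main obstacle is this tie set. Read literally, $\ind\{\Delta(p)\ge0\}$ need not coincide with the threshold rule if $\Delta$ vanishes on an interval below $p^\star$. I would resolve this by stating the conclusion as ``$\pi^\star$ can be taken to be the threshold rule,'' that is, equality of risk, rather than pointwise identity. The only other technical care needed is the measurability and integrability of the conditional expectations, which is routine.
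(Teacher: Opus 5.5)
Your proposal is correct and follows the paper's route: the tower property reduces the risk to $\E[\E[L_{\pi(P)}\mid P]]$, choosing the action with the smaller conditional expected loss at each $p$ gives $\pi^\star$, and single crossing then yields the threshold form. Your tie-set handling is more careful than the paper's proof, which simply asserts $\{p:\Delta(p)\ge 0\}=[p^\star,1/2]$; that fails whenever $\Delta$ vanishes somewhere below $p^\star$, so your conclusion that $\ind\{p\ge p^\star\}$ is Bayes-optimal (equal risk, not pointwise equality with $\pi^\star$) is the right repair.
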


\begin{proof}
By iterated expectation,
$\mathcal R(\pi)=\E[\E[L_{\pi(P)}\mid P]]$.
For each realized $P=p$, the inner conditional expectation is minimized by choosing the action
with smaller conditional expected loss. This yields $\pi^\star(p)=\mathbf 1\{\Delta(p)\ge 0\}$.
If $\Delta(\cdot)$ single-crosses, the set $\{p:\Delta(p)\ge 0\}$ is an interval $[p^\star,1/2]$,
hence $\pi^\star$ is a threshold.
\end{proof}

A sufficient structural condition for a monotone/threshold rule is that (i) the relative benefit of
RB-PEM increases with an underlying misranking-severity parameter, and (ii) the probe statistic $P$
is stochastically increasing in that parameter (e.g., a monotone likelihood ratio signal).
In our setting, $P$ is a rank-disagreement proxy that is constant-factor equivalent to Kendall
discordance \citep{DiaconisGraham1977}, and our mechanism analysis links higher misranking to larger
update dispersion and curvature loss. We empirically validate approximate single-crossing behavior
in our core COCO regime.

\section{Additional Theory Details}
\label{app:theory}

\begin{proof}[Proof of Lemma~\ref{lem:pem_mse_opt}]
For any $\mathcal{G}$-measurable $a$,
\begin{align*}
\E\left[\|\Delta m-a\|^2\mid\mathcal{G}\right]
=
\E\left[\|\Delta m-\E[\Delta m\mid\mathcal{G}]\|^2\mid\mathcal{G}\right]+\|a-\E[\Delta m\mid\mathcal{G}]\|^2,
\end{align*}
by $\E[(\Delta m-\E[\Delta m\mid\mathcal{G}])^\top(a-\E[\Delta m\mid\mathcal{G}])\mid\mathcal{G}]=0$.
\end{proof}

Lemma~\ref{lem:pem_mse_opt} is the same ``conditioning improves an estimator'' principle behind the classical Rao--Blackwell theorem; see, e.g., \citet{LehmannCasella1998,CasellaBerger2002}.
Here the $\sigma$-field $\mathcal{G}$ is generated by the candidate set (and hence contains all information unrelated to evaluation noise), so $\Delta m_{\mathrm{PEM}}$ is the variance-reduced update that averages only over ranking uncertainty.

\begin{proof}[Proof of Theorem~\ref{thm:convex_pem_progress}]
Since $f$ is $\alpha$-strongly convex on the convex set $\mathcal{C}$, for any $u,v\in\mathcal{C}$,
\begin{equation}
f(u)\;\ge\;f(v)+\langle\nabla f(v),u-v\rangle+\frac{\alpha}{2}\|u-v\|^2.
\label{eq:strongconvex_ineq}
\end{equation}
Apply~\eqref{eq:strongconvex_ineq} with $u=X$ and $v=\bar X$, which is valid by Assumption~\ref{ass:local_strong_convex}(ii), and take conditional expectation:
\begin{align*}
\E\left[f(X)\mid x_{1:\lambda}\right]
\ge
f(\bar X)+
\left\langle\nabla f(\bar X),\E\left[X-\bar X\mid x_{1:\lambda}\right]\right\rangle+
\frac{\alpha}{2}\E\left[\|X-\bar X\|^2\mid x_{1:\lambda}\right].
\end{align*}    
The middle term vanishes because $\E\left[X-\bar X\mid x_{1:\lambda}\right]=0$ by the definition of $\bar X$.
\end{proof}


\begin{table}[H]
  \centering
  \small
  \setlength{\tabcolsep}{3pt}
  \renewcommand{\arraystretch}{1.15}
  \begin{tabular}{@{}p{0.08\textwidth}p{0.20\textwidth}p{0.66\textwidth}@{}}
    \toprule
    \textbf{ID} & \textbf{Figure/Table} & \textbf{What it demonstrates (and why it matters)} \\
    \midrule
    F1 & Fig.~\ref{fig:app_mechanism_quadratic} & A controlled mechanism chain:
    more misranking $\Rightarrow$ larger update dispersion $\Rightarrow$ larger curvature-induced loss, matching Theorem~\ref{thm:convex_pem_progress}. \\
    F2 & Fig.~\ref{fig:app_ablations}; Tab.~\ref{tab:log_weight_ablation} & RB-PEM gains are robust to estimator knobs and to the bootstrap-internal weight map, and persist even when boundary reevaluation overhead is disabled ($K_{\max}=0$), ruling out ``hidden resampling'' or a special weight parametrization as the main driver. \\
    F3 & Fig.~\ref{fig:app_diagnostics} & Online diagnostics separate successes from failures and reveal \emph{which} assumptions break; failures often manifest as \emph{depth collapse} due to excessive boundary reevaluations. \\
    F4 & Fig.~\ref{fig:app_metric_sandwich} & The rank-disagreement statistic used by the probe is tightly related to Kendall discordance and top-$\mu$ elite disagreement; empirically, the theoretical bounds are never violated. \\
    F5 & Fig.~\ref{fig:app_probe_decoupling} & A counterexample where local variance is \emph{zero} but misranking is large; misranking-based probe succeeds while variance-based probe fails, justifying our probe design. \\
    F6 & Fig.~\ref{fig:app_probe_calibration} & Calibration curves: $\Pr(\text{RB-PEM wins}\mid P)$ is monotone in the probe statistic $P$, supporting a threshold rule as in Prop.~\ref{prop:threshold_optimality}. \\
    F7 & Fig.~\ref{fig:app_probe_budget} & Probe budget vs.\ reliability: small probes already discriminate well (AUC $\approx 0.71$ at $\lambda{=}4$) and saturate near the default population size. \\
    F8 & Fig.~\ref{fig:app_threshold_sensitivity} & Threshold choice is not brittle: accuracy and decision regret exhibit a broad plateau; misranking probe dominates a variance proxy across thresholds. \\
    F9 & Fig.~\ref{fig:app_depth_fidelity} & Depth--fidelity robustness across budgets/dimensions and a UH-CMA-ES sweep showing evaluation-stage reevaluation is structurally disadvantaged under fixed budget. \\
    F10 & Fig.~\ref{fig:app_mlp_digits0} & External validity on a nonconvex real-data task; warmstarting can eliminate probe overhead when evaluations are effectively deterministic. \\
    F11 & Tab.~\ref{tab:high_misranking_complete} & Complete COCO results on the high-misranking function class, demonstrating the main-text pattern is not cherry-picked. \\
    F12 & Tabs.~\ref{tab:misranking_group_breakdown}--\ref{tab:noise_type_breakdown} & All-function breakdowns by probe-defined regime and noise family, showing where selection-stage smoothing helps and where probe-and-switch avoids unnecessary overhead. \\
    \bottomrule
  \end{tabular}
  \caption{\textbf{Appendix experiment roadmap.} Each experiment targets a distinct link in the paper's argument:
  (i) depth-over-fidelity mechanism (F1, F9, F11), (ii) RB-PEM validity and robustness (F2--F4),
  and (iii) probe-and-switch design and tuning (F5--F8, F10).}
  \label{tab:app_experiment_roadmap}
\end{table}

\section{Additional Experiments}
\label{app:experiments}

This appendix provides supplementary experimental evidence supporting the paper's central thesis:
under strict fixed evaluation budgets, the dominant efficiency bottleneck for noisy rank-based evolution strategies is often
\emph{update quality per oracle call}, not update quality per generation.
Accordingly, methods that spend budget on \emph{evaluation-stage} denoising (e.g., uniform resampling or
uncertainty-handling reevaluations) can lose by collapsing \emph{depth} (the number of completed generations),
while \emph{selection-stage} uncertainty integration (RB-PEM) can improve performance at near-maximal depth.
We also provide targeted evidence for the design of our probe-and-switch policy.

Note that relative to the earlier experimental plan, we (i) removed the planned transfer-matrix experiment, and
(ii) merged the prior ``budget grid'' and ``UH \texttt{maxevals} sweep'' into a single robustness study.
To avoid gaps and to make cross-referencing unambiguous, we renumber the appendix experiments sequentially as
F1--F11 below.

Table~\ref{tab:app_experiment_roadmap} summarizes the role of each appendix experiment and the key
alternative explanation it addresses.

\FloatBarrier
\subsection{Mechanism Validation on a Controlled Quadratic}
\label{app:mechanism_quadratic}

Theorem~\ref{thm:convex_pem_progress} formalizes the key mechanism behind ``depth over fidelity'':
under local curvature, conditional update dispersion incurs an unavoidable Jensen gap, so reducing dispersion
can improve expected progress per oracle call.
This experiment tests the full mechanism chain in the simplest setting where the curvature penalty is
analytically transparent.

\textbf{Setup}.
We use a strongly convex quadratic $f(x)=\frac{1}{2}\|x\|^2$ (strong convexity parameter $\alpha=1$) and sample
$\lambda=16$ candidates in $d=40$ dimensions (truncation $\mu=8$).
For each of 200 independently sampled candidate sets, we draw \emph{two} independent noisy evaluation vectors
(with additive Gaussian noise of standard deviation $\sigma_{\mathrm{noise}}=1.0$), producing two induced rankings.
We measure:
(i)~a two-draw rank-disagreement score $M_{\mathrm{RD}}$ (same structure as the probe statistic $P$ in
Eq.~\eqref{eq:probe_statistic});
(ii)~\emph{update dispersion} $\|\Delta m^{(a)}-\Delta m^{(b)}\|^2$ under the two rankings; and
(iii)~the \emph{curvature loss} (Jensen gap) $\E[f(m+\Delta m)]-f(m+\E[\Delta m])$, estimated via Monte Carlo
(256 draws per candidate set).

\textbf{Results and interpretation}.
Figure~\ref{fig:app_mechanism_quadratic} verifies both links in the chain.
First, dispersion increases with misranking (Pearson $r=0.45$; Fig.~\ref{fig:app_mechanism_quadratic}a),
showing that ranking noise is directly converted into update randomness by truncation selection.
Second, the curvature loss matches the quadratic identity
$\E[f(m+\Delta m)]-f(m+\E[\Delta m])=\tfrac{1}{2}\mathrm{Var}(\Delta m)$ exactly (slope 1; Fig.~\ref{fig:app_mechanism_quadratic}b).
Together, these results give a controlled, falsifiable confirmation of the theoretical story:
misranking is not merely ``label noise'', it induces update dispersion, and under curvature that dispersion
translates into real expected objective loss.

\begin{figure}[H]
  \centering
  \includegraphics[width=0.95\textwidth]{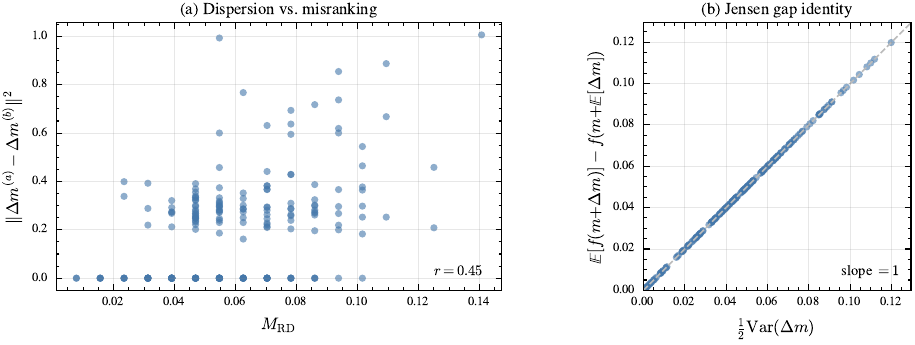}
  \caption{\textbf{Mechanism validation on a strongly convex quadratic.}
  (a) Update dispersion $\|\Delta m^{(a)}-\Delta m^{(b)}\|^2$ grows with two-draw misranking $M_{\mathrm{RD}}$ (Pearson $r=0.45$).
  (b) For quadratic objectives, the Jensen gap equals $\tfrac{1}{2}\mathrm{Var}(\Delta m)$ exactly (slope 1), confirming that dispersion translates into expected loss under curvature.
  \emph{Protocol:} $d{=}40$, $\lambda{=}16$, $\mu{=}8$, $\sigma_{\mathrm{noise}}{=}1.0$.
  200 independently sampled candidate sets; 256 Monte Carlo draws per set.
  This is a single-step analysis (no optimization loop).}
  \label{fig:app_mechanism_quadratic}
\end{figure}

\begin{figure}[H]
  \centering
  \includegraphics[width=0.95\textwidth]{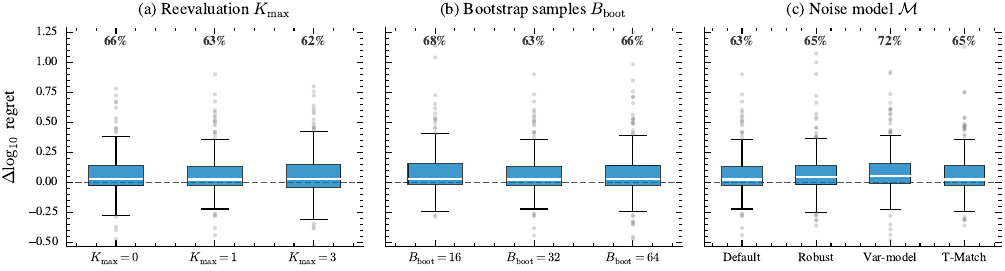}
  \footnotesize
  \caption{\textbf{Estimator ablations on the high-misranking COCO subset ($d{=}40$, $B{=}100d$, 225 problems).}
  Boxplots show per-instance $\Delta \log_{10}$ regret relative to CMA-ES; the dashed line marks parity.
  Percentages report win rates (fraction with $\Delta>0$).
  (a) Reevaluation cap $K_{\max}$: gains persist even at $K_{\max}{=}0$ (no boundary reevaluations).
  (b) Bootstrap samples $B_{\mathrm{boot}}$: stable across 16--64.
  (c) Noise-model choice: all variants improve; variance-based modeling yields the highest win rate (72\%).
  \emph{Protocol:} $\lambda{=}15$, $\mu{=}7$. Non-ablated hyperparameters held at defaults ($B_{\mathrm{boot}}{=}32$, $K_{\max}{=}1$).
  225 problems (15 functions $\times$ 15 instances), each run once per method.
  Boxes show the median (center line) and interquartile range (IQR); whiskers extend to $1.5\times$ IQR; outliers plotted individually.}
  \label{fig:app_ablations}
\end{figure}

\subsection{RB-PEM Estimator Ablations}
\label{app:ablations}

A natural concern is that RB-PEM's gains might be fragile (driven by a narrow hyperparameter choice), or
they might simply reflect spending extra evaluations near the truncation boundary (i.e., implicit resampling).
This ablation isolates these possibilities. 

\textbf{Setup}. We evaluate RB-PEM variants on the COCO \texttt{bbob-noisy} high-misranking subset at $d=40$ and budget $B=100d$
(225 problems: 15 functions $\times$ 15 instances).
For each problem instance, we report
$\Delta \log_{10}\text{regret} = \log_{10}(f(\hat x_B) - f^\star)_{\text{CMA}} - \log_{10}(f(\hat x_B) - f^\star)_{\text{variant}}$,
so $\Delta>0$ means the variant improves over CMA-ES.
Figure~\ref{fig:app_ablations} sweeps:
(a)~boundary reevaluation cap $K_{\max}\in\{0,1,3\}$;
(b)~bootstrap sample count $B_{\mathrm{boot}}\in\{16,32,64\}$;
(c)~noise-model variants used inside the bootstrap weight estimator.

\textbf{Results and interpretation}.
All variants exhibit positive median improvements and clear majority win rates (62--72\%; Fig.~\ref{fig:app_ablations}).
Two implications are especially important.
First, the improvement persists at $K_{\max}=0$ (66\% win rate), where RB-PEM operates with essentially the
same per-generation evaluation cost as vanilla CMA-ES.
This strongly disfavors the hypothesis that RB-PEM wins primarily by spending extra evaluations at the boundary.
Instead, it supports the intended interpretation: the main gain comes from how uncertainty is used
(selection-stage smoothing of the rank weights), not from brute-force denoising.
Second, performance is stable across bootstrap counts and across several plausible noise models, suggesting that the
estimator is not brittle and that modest Monte Carlo effort (e.g., $B_{\mathrm{boot}}\approx 32$) is sufficient.

\FloatBarrier
\subsection{Log-Weight Ablation}
\label{app:log_weight_ablation}

The CMA-ES baseline uses its standard logarithmic recombination weights throughout.
Our default RB-PEM estimator uses a smooth power-lift weight map inside the bootstrap, and Probe-and-Switch inherits this default when it switches to RB-PEM.
This ablation asks whether the gains depend on that bootstrap-internal choice.
We rerun matched variants that replace only the bootstrap-internal map with the standard CMA-ES log weights from~\eqref{eq:app_log_weights}, holding seeds, candidate populations, budgets, and all other hyperparameters fixed.

\begin{table}[H]
\centering
\small
\setlength{\tabcolsep}{5pt}
\renewcommand{\arraystretch}{1.12}
\begin{tabular}{lcccc}
\toprule
Method (vs.\ CMA-ES) & $B=20d$ & $B=50d$ & $B=100d$ & $B=200d$ \\
\midrule
RB-PEM (power-lift) & 65.6\% & 69.5\% & 68.1\% & 62.2\% \\
RB-PEM (log-weight) & 63.7\% & 66.4\% & 67.7\% & 62.2\% \\
Probe-and-Switch (power-lift) & 63.7\% & 67.3\% & 65.2\% & 65.3\% \\
Probe-and-Switch (log-weight) & 61.2\% & 62.4\% & 65.3\% & 62.5\% \\
\bottomrule
\end{tabular}
\caption{\textbf{Log-weight ablation.}
Win rates against CMA-ES on the high-misranking COCO subset across four budgets.
The CMA-ES baseline uses standard logarithmic recombination weights in every row; the ablation changes only the bootstrap-internal weight map used by RB-PEM, either directly or inside Probe-and-Switch.
The maximum gap between each default method and its log-weight counterpart is $3.1$ percentage points, indicating that the expected-weight mechanism is not tied to the power-lift parametrization.
\emph{Protocol:} 15 high-misranking functions $\times$ 15 instances $\times$ 3 dimensions = 675 matched problems per budget; $B_{\mathrm{boot}}{=}32$, $K_{\max}{=}1$, and shared seeds/candidate populations across paired variants.}
\label{tab:log_weight_ablation}
\end{table}

Direct head-to-head comparisons also show near parity between the two bootstrap-internal weight maps: RB-PEM (log-weight) wins against RB-PEM (power-lift) on roughly $44\%$ of matched high-misranking problems, with median regret differences around $10^{-3}$.
Median wall-clock overheads at $d=40$, $B=100d$ are similar as well: $1.70\times$, $1.64\times$, $1.66\times$, and $1.62\times$ CMA-ES for RB-PEM (power-lift), RB-PEM (log-weight), Probe-and-Switch (power-lift), and Probe-and-Switch (log-weight), respectively.

\FloatBarrier
\subsection{Residual-Pool Diagnostic Snapshots}
\label{app:diagnostics_snapshot}

RB-PEM relies on reusing a pooled residual distribution; Appendix~\ref{app:pool_theory} shows that
residual-pool mismatch can be decomposed into multiple falsifiable components.
This experiment asks whether our diagnostics meaningfully distinguish successful runs from failures and whether they
identify the relevant mismatch mode.

\textbf{Setup}. On the same 225 high-misranking COCO problems ($d=40$, $B=100d$), we compare RB-PEM against a representative
evaluation-stage baseline (UH-CMA-ES with a conservative reevaluation budget).
We label each run as \emph{good} if RB-PEM achieves lower final regret and \emph{bad} otherwise (193 good, 32 bad).
We then compare four per-run diagnostic summaries (defined in Appendix~\ref{app:pool_theory}):
drift $W_1$, shape $W_1$, scale $R^2$, and a centering-stability metric (``center rel'').

\textbf{Results: diagnostics isolate specific failure modes}.
Figure~\ref{fig:app_diagnostics}a shows that \emph{shape} mismatch and \emph{centering} instability are the dominant
distinguishers between good and bad runs:
one-sided Mann--Whitney tests detect significant separation for shape $W_1$ ($p=0.003$) and center rel ($p=0.001$),
while drift and scale do not separate at conventional levels.
This selective pattern is desirable: rather than firing a generic alarm, the diagnostics point to a concrete
assumption violation (``residuals are well-standardized and shape-stable'') as the likely culprit in failures.

\textbf{Results: failures often manifest as depth collapse}.
Figure~\ref{fig:app_diagnostics}b shows per-generation traces of shape $W_1$ for a representative good and bad case.
The bad case exhausts its fixed budget by generation 129, while the good case continues to generation 211
($\Delta T=82$ generations; 39\% of depth lost).
This illustrates a concrete failure pathway that is consistent with the paper's depth accounting:
instability in standardization can increase boundary reevaluations (raising $K_t$), which directly consumes budget and
eliminates the depth advantage RB-PEM is designed to preserve.

\begin{figure}[H]
  \centering
  \includegraphics[width=\textwidth]{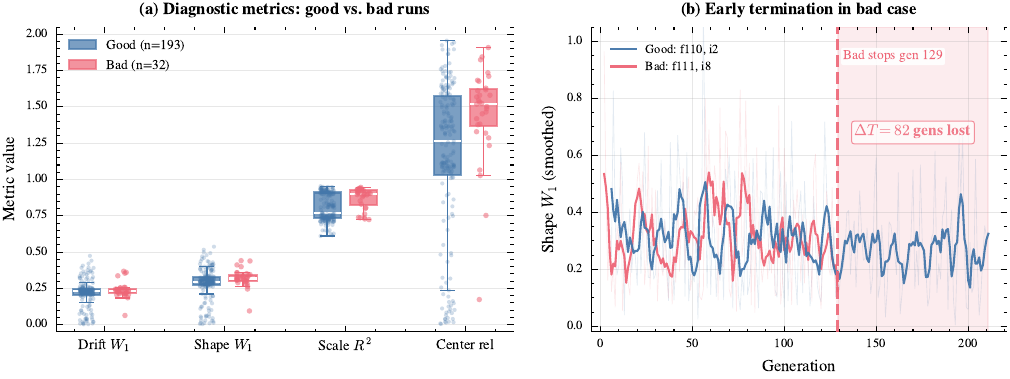}
  \caption{\textbf{Diagnostics make residual-pool assumptions falsifiable.}
  (a) Boxplots of four diagnostic summaries for \emph{good} runs ($n=193$, RB-PEM wins) vs.\ \emph{bad} runs ($n=32$, RB-PEM loses) on the COCO high-misranking subset ($d=40$, $B=100d$).
  Shape $W_1$ ($p=0.003$) and centering stability ($p=0.001$) significantly separate the two groups (one-sided Mann--Whitney); drift and scale do not, indicating that shape mismatch and standardization instability are the dominant failure modes.
  (b) Per-generation traces of shape $W_1$ (faint: raw; solid: smoothed) for a representative good and bad run.
  The bad run terminates at generation 129, losing $\Delta T=82$ generations (39\% of depth) relative to the good run (generation 211), illustrating how pool mismatch triggers depth collapse---the same mechanism the paper's budget accounting predicts.
  \emph{Protocol:} $\lambda{=}15$, $\mu{=}7$, $B_{\mathrm{boot}}{=}32$, $K_{\max}{=}1$.
  225 problems (15 functions $\times$ 15 instances), each run once.
  Boxes show the median and IQR; whiskers extend to $1.5\times$ IQR; boxplot outlier markers suppressed. Individual data points overlaid as jittered scatter.}
  \label{fig:app_diagnostics}
\end{figure}

\begin{figure}[H]
  \centering
  \includegraphics[width=\textwidth]{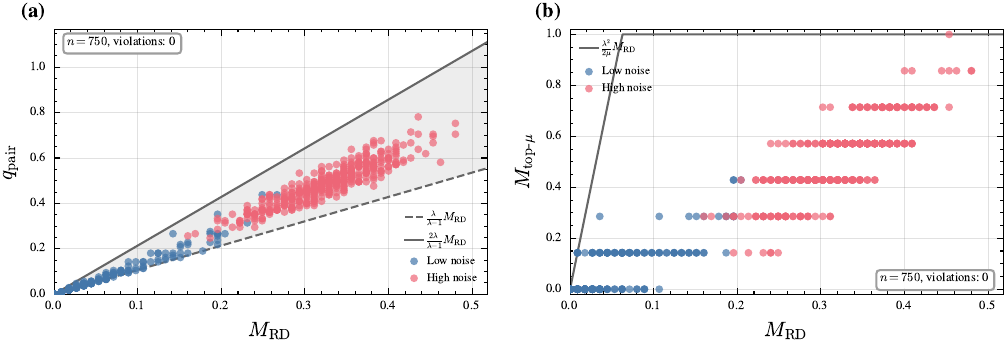}
  \caption{\textbf{Empirical validation of sandwich bounds for rank disagreement.}
  (a) Kendall discordance $q_{\mathrm{pair}}$ vs.\ $M_{\mathrm{RD}}$; (b) top-$\mu$ disagreement $M_{\mathrm{top}\mu}$ vs.\ $M_{\mathrm{RD}}$.
  Grey lines indicate the bounds in Eq.~\eqref{eq:sandwich_bounds}. Zero violations.
  \emph{Protocol:} $d{=}40$, $\lambda{=}15$, $\mu{=}7$.
  750 candidate sets from COCO \texttt{bbob-noisy} (30 functions $\times$ 1 instance $\times$ 25 candidate sets per function), sampled via CMA-ES evolution.}
  \label{fig:app_metric_sandwich}
\end{figure}

\FloatBarrier
\subsection{Interpreting the Rank-Disagreement Probe via Sandwich Bounds}
\label{app:metric_sandwich_empirical}

Probe-and-switch (Appendix~\ref{app:decision}) uses a rank-disagreement statistic
$P=\frac{1}{\lambda^2}\sum_i |r_i^{(a)}-r_i^{(b)}|$ (Eq.~\eqref{eq:probe_statistic}).
This statistic is essentially a normalized Spearman footrule distance between two permutations.
To connect it to more familiar notions of misranking, we relate it to:
(i)~Kendall discordance (fraction of discordant pairs) and
(ii)~\emph{elite disagreement} (how often the top-$\mu$ set changes),
which directly governs truncation selection.

\textbf{Theoretical relations}.
Let $q_{\mathrm{pair}}\in[0,1]$ be the fraction of discordant pairs between the two induced rankings (Kendall),
and let $M_{\mathrm{top}\mu}\in[0,1]$ be the fraction of indices whose membership in the top-$\mu$ set differs.
Standard inequalities between footrule and Kendall distances imply constant-factor bounds of the form
\begin{equation}
  \frac{\lambda}{\lambda-1}M_{\mathrm{RD}} \;\le\; q_{\mathrm{pair}} \;\le\; \frac{2\lambda}{\lambda-1}M_{\mathrm{RD}},
  \qquad
  M_{\mathrm{top}\mu} \;\le\; \frac{\lambda^2}{2\mu}M_{\mathrm{RD}},
  \label{eq:sandwich_bounds}
\end{equation}
where $M_{\mathrm{RD}}$ is the two-draw rank-disagreement score (numerically the same object as $P$ up to notation).

\textbf{Empirical validation}.
We sample 750 candidate sets from COCO \texttt{bbob-noisy} at $d=40$ with the default $\lambda=15$ and $\mu=7$.
Figure~\ref{fig:app_metric_sandwich} shows $(q_{\mathrm{pair}}, M_{\mathrm{top}\mu})$ versus $M_{\mathrm{RD}}$ together with
the bounds in Eq.~\eqref{eq:sandwich_bounds}.
There are zero violations out of 750 points.
This supports two practical interpretations:
(i)~$M_{\mathrm{RD}}$ is a scale-consistent proxy for overall rank instability (Kendall discordance), and
(ii)~large $M_{\mathrm{RD}}$ implies nontrivial instability of the top-$\mu$ set, which is precisely the part of the ranking
that affects CMA-ES updates most strongly.

\subsection{Variance Does Not Equal Misranking (A Counterexample)}
\label{app:probe_decoupling}

A tempting simplification would be to trigger switching based on a \emph{variance probe} (estimate the noise variance
at a point and switch when it is large).
This experiment constructs a clean counterexample showing that variance at a single location can be arbitrarily
misleading for rank-based ES, while rank disagreement remains reliable.

\textbf{Radial (state-dependent) noise model}.
We consider heteroscedastic noise with scale increasing with distance to an initial reference point $x_0$:
$\sigma_{\mathrm{eff}}(x)=\sigma\|x-x_0\|_{\mathrm{RMS}}$.
Under this model, evaluations \emph{at} $x_0$ are effectively noiseless, but ES candidates sampled away from $x_0$
experience substantial noise---exactly where misranking matters.

\textbf{Probe decoupling}.
Figure~\ref{fig:app_probe_decoupling}a shows the key pathology:
the variance probe stays at machine precision across all 54 problem instances (it never triggers),
while the rank-disagreement probe $M_{\mathrm{RD}}$ spans a wide range and triggers on 49/54 problems
(at $\tau=0.12$).
Thus, variance can be \emph{decoupled} from the misranking regime experienced by the candidate population.

\textbf{Algorithmic consequence}.
This decoupling changes decisions.
Figure~\ref{fig:app_probe_decoupling}b compares probe-and-switch using misranking detection (MR) versus a variance-based
variant (Var). Misranking-based switching wins on 39/54 problems ($p=0.0007$, sign test).
This provides direct support for our design choice in Appendix~\ref{app:decision}: a probe should measure ranking
instability \emph{in the current search distribution}, not variance at an arbitrary point.

\begin{figure}[H]
  \centering
  \includegraphics[width=\textwidth]{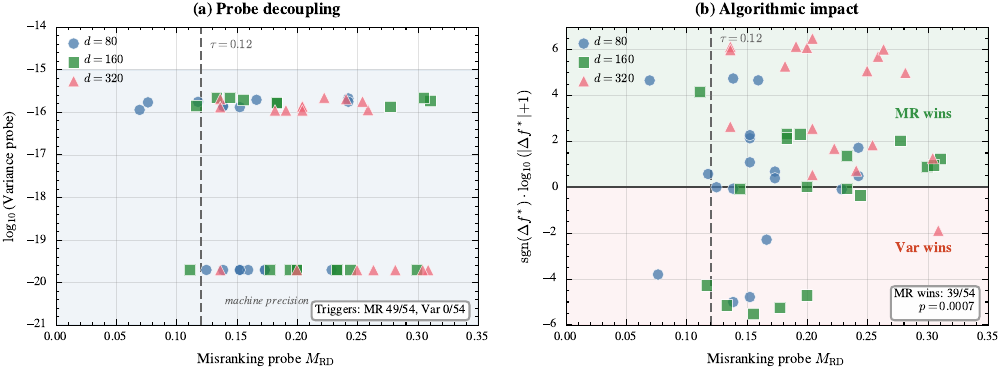}
  \caption{\textbf{Variance $\neq$ misranking under radial noise.}
  (a) The variance probe is near machine precision while the misranking probe varies widely; triggers: MR 49/54, Var 0/54.
  (b) Switching decisions: MR-based probe-and-switch outperforms variance-based (MR wins 39/54; $p=0.0007$).
  \emph{Protocol:} Radial noise $\sigma_{\mathrm{eff}}(x)=0.5\|x-x_0\|$, $d\in\{80,160,320\}$, $B=200d$, $B_{\mathrm{boot}}{=}32$, $K_{\max}{=}1$.
  $\lambda$ follows the CMA-ES default at each $d$ (17, 19, 21).
  54 problems (6 functions $\times$ 3 instances $\times$ 3 dimensions), each run once per method.}
  \label{fig:app_probe_decoupling}
\end{figure}

\subsection{Probe Calibration Curves}
\label{app:probe_calibration}

Proposition~\ref{prop:threshold_optimality} shows that if the conditional advantage $\Delta(p)$ single-crosses,
then a \emph{threshold} rule in the probe statistic $P$ is Bayes-optimal among rules depending only on $P$.
This experiment empirically checks whether $P$ behaves as a calibrated predictor of when RB-PEM wins.

\textbf{Setup}.
We compute the probe statistic $P$ (Eq.~\eqref{eq:probe_statistic}) and bin problems into quantiles of $P$.
Within each bin we estimate the empirical win rate of RB-PEM over CMA-ES, with 95\% Wilson score intervals.
To test generalization and avoid within-suite leakage, we evaluate on a held-out test split of instances
(instances 6--15), using thresholds calibrated on instances 1--5.
We report two representative budgets to illustrate budget dependence.

\textbf{Results and interpretation}.
Figure~\ref{fig:app_probe_calibration} shows a clear monotone trend:
at low $P$ (stable ranks), RB-PEM wins with probability well below 0.5 (CMA-ES is preferred);
at high $P$, RB-PEM wins with probability 0.7--0.8.
The calibrated thresholds (vertical dashed lines) fall near the empirical 0.5 crossing, aligning with the
decision-theoretic interpretation: the probe is measuring a quantity that is predictive of the algorithm crossover.
The threshold shifts with budget in the expected direction: when the remaining budget is larger, the relative cost of
RB-PEM's bounded overhead is smaller, so switching becomes beneficial at lower misranking levels.

\begin{figure}[H]
  \centering
  \includegraphics[width=1.0\textwidth]{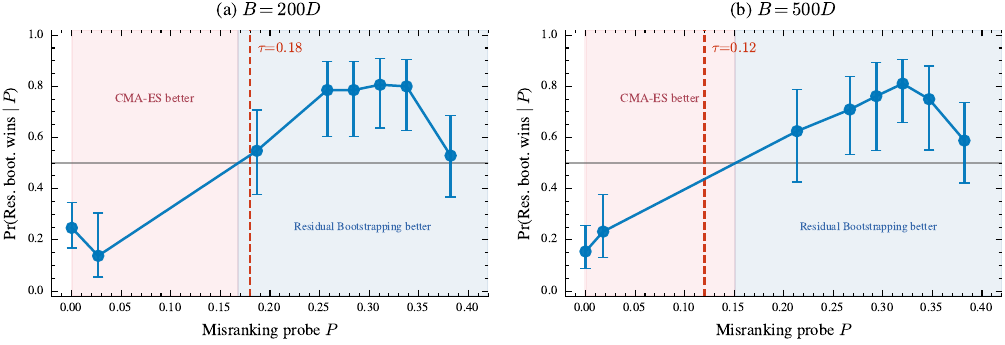}
  \caption{\textbf{Probe calibration curves.}
  Empirical $\Pr(\text{RB-PEM wins} \mid P)$ versus probe statistic $P$ (quantile bins; 95\% Wilson intervals).
  Shaded regions indicate which algorithm is preferred; the vertical dashed line is the calibrated threshold.
  The win probability increases monotonically with $P$, supporting a threshold decision rule.
  \emph{Protocol:} $d{=}40$, $\lambda{=}15$, $\mu{=}7$, $B_{\mathrm{boot}}{=}32$, $K_{\max}{=}1$.
  Two budget levels: $B{=}200d$ and $B{=}500d$. 450 problems per budget (30 functions $\times$ 15 instances), each run once.
  Calibration uses instances 1--5 (train) and 6--15 (test).}
  \label{fig:app_probe_calibration}
\end{figure}

\begin{figure}[H]
  \centering
  \includegraphics[width=0.55\textwidth]{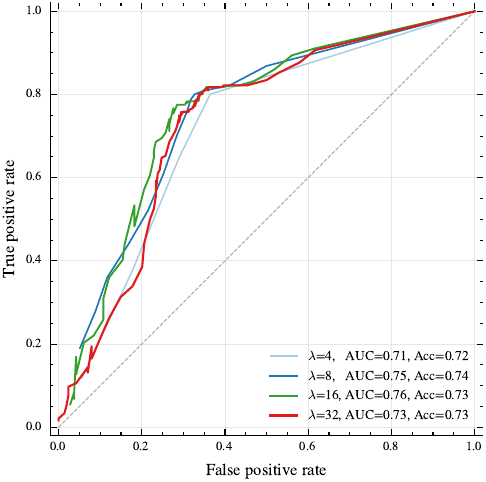}
  \caption{\textbf{Probe budget vs.\ ROC.}
  ROC curves for different probe population sizes $\lambda$ on COCO \texttt{bbob-noisy} ($d{=}40$, $B{=}200d$).
  Legend: AUC and accuracy at the displayed operating point. Reliability improves up to $\lambda\approx 16$ and then saturates.
  \emph{Protocol:} $B_{\mathrm{boot}}{=}32$, $K_{\max}{=}1$.
  30 functions $\times$ 15 instances = 450 problems per $\lambda$ value, each run once.
  $\lambda\in\{4,8,16,32\}$; $\mu{=}\lfloor\lambda/2\rfloor$. Operating point: $\tau{=}0.12$.}
  \label{fig:app_probe_budget}
\end{figure}

\FloatBarrier
\subsection{Probe Reliability versus Probe Budget}
\label{app:probe_budget}

The probe costs $2\lambda$ evaluations, so it must be informative at modest budgets.
This experiment quantifies the classification quality of the probe as we vary its cost by changing $\lambda$.

\textbf{Setup}.
We sweep probe population sizes $\lambda\in\{4,8,16,32\}$.
For each setting we compute $P$ and evaluate its ability to classify problems where RB-PEM beats CMA-ES
(ground-truth labels from full runs) on 450 COCO problems at $d=40$, $B=200d$.
We report ROC curves; the legend lists AUC and accuracy at a representative threshold.

\textbf{Results and interpretation}.
Figure~\ref{fig:app_probe_budget} shows that even the cheapest probe ($\lambda=4$, cost 8 evaluations) is strongly
informative (AUC 0.71, accuracy 0.72), and performance improves up to $\lambda\approx 16$ (AUC 0.76).
Increasing probe cost further does not help (AUC drops to 0.73 at $\lambda=32$), indicating diminishing returns and a
clear cost-effectiveness plateau.
This justifies using a modest probe budget comparable to the default ES population size.

\FloatBarrier
\subsection{Threshold Sensitivity Analysis}
\label{app:threshold_sensitivity}

Probe-and-switch introduces a single scalar hyperparameter: the switching threshold $\tau$.
A practical method should not require fragile tuning.
This experiment evaluates (i) how accuracy depends on $\tau$ and (ii) how much regret is incurred by choosing a
suboptimal threshold.

\textbf{Setup}.
We sweep $\tau\in[0,0.30]$ and evaluate classification accuracy of the misranking probe (MR) and a variance proxy
(Var) at two budgets.
We also compute \emph{decision regret}: the mean $\log_{10}$ performance gap to an oracle that always chooses the
better of CMA-ES and RB-PEM for each problem instance.
We report both the train split (instances 1--5) and the held-out test split (instances 6--15).

\textbf{Results and interpretation}.
Figure~\ref{fig:app_threshold_sensitivity}a shows a broad accuracy plateau: for MR, accuracy varies by only
$\approx 2$ percentage points over $\tau\in[0.08,0.18]$.
Across all thresholds, MR outperforms Var by roughly 5--6 percentage points, reinforcing the necessity of
rank-based probing (cf.\ F5).
Figure~\ref{fig:app_threshold_sensitivity}b shows that decision regret is highly asymmetric:
very small $\tau$ causes \emph{over-switching} (unnecessary RB-PEM overhead on low-misranking problems),
while very large $\tau$ causes \emph{under-switching} (missing beneficial RB-PEM deployments).
Importantly, the regret curve is also flat across the plateau, so near-optimal decisions do not require precise
threshold tuning; $\tau=0.12$ lies safely in the robust region.

\begin{figure}[H]
  \centering
  \includegraphics[width=0.98\textwidth]{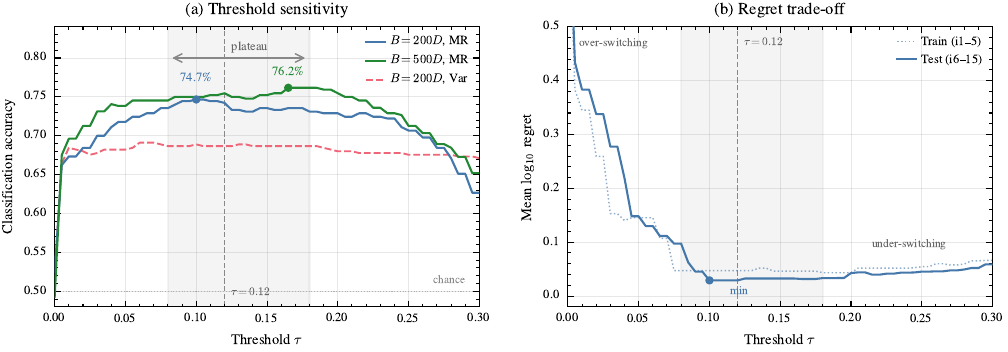}
  \caption{\textbf{Threshold sensitivity.}
  (a) Classification accuracy vs.\ threshold $\tau$: MR exhibits a broad plateau and outperforms Var.
  (b) Decision regret vs.\ $\tau$: over-switching dominates at small $\tau$, under-switching at large $\tau$; the minimum lies within the accuracy plateau, showing robust tuning.
  \emph{Protocol:} $d{=}40$, $\lambda{=}15$, $\mu{=}7$, $B_{\mathrm{boot}}{=}32$, $K_{\max}{=}1$.
  Two budget levels: $B{=}200d$ and $B{=}500d$. 450 problems per budget (30 functions $\times$ 15 instances), each run once.
  Accuracy is the proportion of problems correctly routed; decision regret is the mean $\log_{10}$ performance gap to an oracle selector.}
  \label{fig:app_threshold_sensitivity}
\end{figure}

\FloatBarrier
\subsection{Depth--Fidelity Robustness and UH-CMA-ES Sensitivity}
\label{app:depth_fidelity_tradeoff}

The main paper argues that under fixed budget, evaluation-stage denoising is structurally disadvantaged because it
reduces depth.
This section provides two complementary robustness checks:
(a) whether RB-PEM's advantage persists across budgets and dimensions, and
(b) whether UH-CMA-ES can be ``rescued'' by tuning its \texttt{maxevals} parameter.

\textbf{a: Budget and dimension robustness}.
Figure~\ref{fig:app_depth_fidelity}a reports win rates of RB-PEM against three baselines across
budgets $B\in\{50d,100d,200d\}$ and dimensions $d\in\{20,40\}$ (225 high-misranking COCO problems per setting).
RB-PEM wins decisively against UH-CMA-ES (roughly 79--94\%) and against Resample($k{=}10$) (roughly 78--83\%) in
every regime, and maintains majority wins even against vanilla CMA-ES.
The advantage is larger at $d=40$ than at $d=20$, consistent with misranking becoming more damaging in higher
dimensions and with the paper's empirical observation that depth becomes increasingly predictive of performance as
misranking increases.

\textbf{b: UH-CMA-ES \texttt{maxevals} sweep}.
Figure~\ref{fig:app_depth_fidelity}b sweeps \texttt{maxevals}$\in\{1,10,30\}$.
Across both budgets, UH-CMA-ES remains far below the 50\% parity line versus CMA-ES and versus probe-and-switch.
Moreover, increasing \texttt{maxevals} monotonically \emph{reduces} UH-CMA-ES win rate: allocating more reevaluations
per generation worsens fixed-budget performance.
This supports the interpretation that UH-CMA-ES's underperformance is not a tuning artifact but a depth-loss effect.

\begin{figure}[H]
  \centering
  \includegraphics[width=0.98\textwidth]{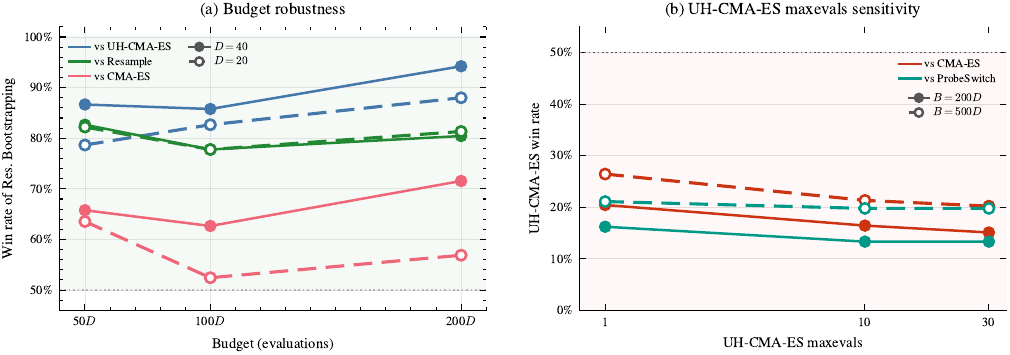}
  \caption{\textbf{Depth--fidelity robustness and baseline sensitivity.}
  (a) Win rate of RB-PEM across budgets and dimensions against UH-CMA-ES, Resample($k{=}10$), and CMA-ES.
  (b) UH-CMA-ES win rate vs.\ CMA-ES and vs.\ Probe-and-Switch ($\tau{=}0.12$) as a function of \texttt{maxevals}; all values are far below parity, and larger \texttt{maxevals} further degrades performance.
  \emph{Protocol:} $B_{\mathrm{boot}}{=}32$, $K_{\max}{=}1$.
  Panel~(a): $d\in\{20,40\}$, $B\in\{50d,100d,200d\}$, 15 high-misranking functions $\times$ 15 instances = 225 problems per (budget, dimension) pair, each run once.
  Panel~(b): $d{=}40$, $B\in\{200d,500d\}$, 30 functions $\times$ 15 instances = 450 problems per budget, each run once; \texttt{maxevals}$\in\{1,10,30\}$.}
  \label{fig:app_depth_fidelity}
\end{figure}

\subsection{External Validity on a Nonconvex Real-Data Task}
\label{app:mlp_digits0}

The COCO suite is synthetic and largely separable from end-to-end ML pipelines.
This experiment tests whether the probe-and-switch principle generalizes to a small nonconvex real-data task where
noise is induced by stochastic mini-batching.

\textbf{Setup}.
We train a single-hidden-layer MLP (4 hidden units) on the \texttt{digits0} binary classification task
(digit 0 vs.\ non-0), with $n=256$ samples and $d=265$ parameters.
We control ranking noise via mini-batch size $B_{\mathrm{batch}}\in\{4,16,256\}$ (smaller batches $\Rightarrow$ noisier
objective values $\Rightarrow$ more misranking).
We compare CMA-ES, probe-and-switch, and a \emph{warmstart} variant that reuses the probe evaluations as part of the
first generation, thereby removing the probe's opportunity cost when the probe indicates ``no switch.''
All methods use the same total budget $B=40d=10{,}600$ and 50 random seeds.

\textbf{Results and interpretation}.
Figure~\ref{fig:app_mlp_digits0} shows three regimes.
\emph{Moderate noise} ($B_{\mathrm{batch}}=16$, $M_{\mathrm{RD}}\in[0.13,0.34]$): probe-and-switch improves over CMA-ES
(66\% win rate) and warmstart is stronger still (76\% win rate), consistent with the paper's main claim that selection-stage
uncertainty integration is most beneficial in high-misranking regimes.
\emph{Extreme noise} ($B_{\mathrm{batch}}=4$): both variants are near parity (54\% win rate), suggesting a saturation regime
where all methods are heavily noise-limited.
Deterministic ($B_{\mathrm{batch}}=256$, $M_{\mathrm{RD}}=0$): warmstart matches CMA-ES almost exactly (48/50 ties,
i.e., 96\%), demonstrating that the probe can correctly detect ``no misranking'' and that warmstarting can remove probe cost
when switching is unnecessary.
Overall, this nonconvex experiment supports the external validity of probe-and-switch and highlights warmstarting as a
simple practical refinement in low-noise settings.

\begin{figure}[H]
  \centering
  \includegraphics[width=1.0\textwidth]{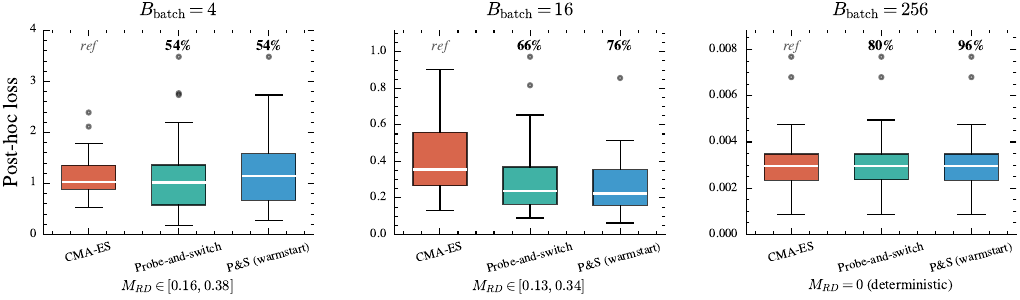}
  \caption{\textbf{MLP training on \texttt{digits0} (nonconvex).}
  Final post-hoc loss (lower is better) across 50 seeds at fixed budget $B=40d$.
  Numbers above boxes indicate the fraction of seeds on which the method improves upon (or ties with) CMA-ES.
  Left: $B_{\mathrm{batch}}=4$ (extreme noise); Center: $B_{\mathrm{batch}}=16$ (moderate noise); Right: $B_{\mathrm{batch}}=256$ (deterministic).
  \emph{Protocol:} $d{=}265$, $\lambda{=}20$ (CMA-ES default), $\mu{=}10$, $B_{\mathrm{boot}}{=}32$, $K_{\max}{=}1$.
  50 independent seeds per (method, batch-size) pair.
  Boxes show the median (center line) and IQR; whiskers extend to $1.5\times$ IQR; outliers plotted individually.}
  \label{fig:app_mlp_digits0}
\end{figure}

\subsection{Complete Results on High-Misranking COCO Functions}
\label{app:high_misranking_complete}

The main text visualizes convergence on a small set of representative functions.
Table~\ref{tab:high_misranking_complete} provides the complete per-function summary on the full high-misranking
function class used throughout the paper, to verify that the reported gains are not an artifact of selective
function choice.

\textbf{Setup}.
We report median final $\log_{10}$ regret at budget $B=100d$ for $d=40$ across 15 instances per function.
We compare CMA-ES, UH-CMA-ES, RB-PEM, and probe-and-switch (bold indicates the best median per function).

\textbf{Results and interpretation}.
Two patterns stand out.
First, RB-PEM and probe-and-switch \emph{dominate} UH-CMA-ES across the entire class (no exceptions),
reinforcing the depth-fidelity argument: evaluation-stage reevaluation is consistently uncompetitive under strict budgets.
Second, selection-stage methods usually outperform vanilla CMA-ES, with the largest gains on the functions where the
noisy ranking is most unstable (e.g., f110, f111, f116).
On the few functions where CMA-ES is marginally better, the gaps are small, consistent with a low-misranking regime
where smoothing and any additional overhead bring limited benefit.
This table therefore complements the main-text figures by showing that the ``selection-stage wins'' pattern is broad
and not cherry-picked.

\begin{table}[H]
  \centering
  \caption{\textbf{Median $\log_{10}$ regret at $B=100d$ across all 15 high-misranking COCO functions} ($d{=}40$, 15 instances each). \textbf{Bold}: lowest per function.
  \emph{Protocol:} $\lambda{=}15$, $\mu{=}7$, $B_{\mathrm{boot}}{=}32$, $K_{\max}{=}1$. Probe-and-Switch uses $\tau{=}0.12$.
  Each instance is run once per method; the reported statistic is the median across 15 instances.}
  \label{tab:high_misranking_complete}
  \footnotesize
  \setlength{\tabcolsep}{5pt}
  \renewcommand{\arraystretch}{1.0}
  \begin{tabular}{l cc cc}
  \toprule
   & \multicolumn{2}{c}{\textit{Baselines}} & \multicolumn{2}{c}{\textit{Selection-stage}} \\
  \cmidrule(lr){2-3} \cmidrule(lr){4-5}
  Function & CMA-ES & UH-CMA-ES & RB-PEM & Probe-and-switch \\
  \midrule
  f108 & 2.39 & 2.46 & 2.37 & \textbf{2.36} \\
  f110 & 4.87 & 5.35 & \textbf{4.47} & 4.53 \\
  f111 & 5.22 & 5.48 & 5.11 & \textbf{5.09} \\
  f113 & 2.87 & 3.05 & 2.77 & \textbf{2.75} \\
  f114 & \textbf{3.03} & 3.16 & 3.13 & 3.06 \\
  f116 & 4.65 & 4.99 & \textbf{4.48} & 4.54 \\
  f117 & 4.93 & 4.98 & 4.90 & \textbf{4.89} \\
  f119 & 1.58 & 1.74 & 1.53 & \textbf{1.50} \\
  f120 & 1.73 & 1.84 & 1.69 & \textbf{1.68} \\
  f122 & 1.04 & 1.14 & \textbf{1.03} & 1.04 \\
  f123 & \textbf{1.13} & 1.19 & 1.15 & 1.15 \\
  f125 & 0.26 & 0.36 & 0.12 & \textbf{0.10} \\
  f126 & 0.32 & 0.39 & 0.22 & \textbf{0.21} \\
  f128 & 1.91 & 1.92 & 1.91 & \textbf{1.90} \\
  f129 & 1.91 & 1.91 & 1.91 & \textbf{1.91} \\
  \bottomrule
  \end{tabular}
\end{table}

\FloatBarrier
\subsection{All-Function Breakdowns by Probe Regime and Noise Family}
\label{app:all_function_breakdowns}

Table~\ref{tab:probeswitch_comparison} and Fig.~\ref{fig:rank_by_probe} report all-function evidence; here we unpack where those gains arise.
The probe statistic separates \texttt{bbob-noisy} into a selected high-misranking subset and a more stable complement.
Table~\ref{tab:misranking_group_breakdown} quantifies this split at $B=100d$.
Always-on RB-PEM is strongest on the selected high-misranking functions but can be harmful on the complement, where smoothing and reevaluation overhead are often unnecessary.
Probe-and-Switch preserves most of the high-misranking gain while reverting toward CMA-ES on the complement.

\begin{table}[H]
  \centering
  \setlength{\tabcolsep}{7pt}
  \renewcommand{\arraystretch}{1.12}
  \begin{tabular}{lcc}
  \toprule
  Method (vs.\ CMA-ES) & Selected high-misranking 15f & Complement 15f \\
  \midrule
  RB-PEM & 68.1\% (460/675) & 35.3\% (238/675) \\
  RB-PEM (log-weight) & 67.7\% (457/675) & 37.3\% (252/675) \\
  Probe-and-Switch & 65.2\% (440/675) & 47.4\% (320/675) \\
  Probe-and-Switch (log-weight) & 65.3\% (441/675) & 46.1\% (311/675) \\
  \bottomrule
  \end{tabular}
  \caption{\textbf{High-misranking versus complement breakdown.}
  Win rates against CMA-ES at $B=100d$ after splitting all 30 COCO \texttt{bbob-noisy} functions by the probe-based partition of Fig.~\ref{fig:rank_by_probe}.
  RB-PEM variants win decisively on the selected high-misranking subset but lose on the complement, while Probe-and-Switch stays near parity there by reverting to CMA-ES when rankings are stable.
  \emph{Protocol:} $d\in\{10,20,40\}$, 15 instances per function, one run per method, 675 matched problems per column.
  The selected high-misranking column is the 15-function subset used throughout the paper; the complement contains the 14 low-misranking functions plus f107, whose large probe value is structural rather than noise-induced.}
  \label{tab:misranking_group_breakdown}
\end{table}

Table~\ref{tab:noise_type_breakdown} gives an orthogonal view by COCO noise family.
The moderate-Gaussian group is mostly stable-ranking, so always-on RB-PEM often pays overhead without enough ranking uncertainty to offset it.
By contrast, the severe Gaussian, Cauchy, and multimodal groups contain regimes where selection-stage smoothing is more useful.
The Cauchy row should be read as an empirical stress test: raw Cauchy noise violates the finite-variance assumption, while the implementation clips standardized residuals to $[-M,M]$ and therefore bootstraps from a winsorized working distribution.

\begin{table}[H]
  \centering
  \setlength{\tabcolsep}{7pt}
  \renewcommand{\arraystretch}{1.12}
  \begin{tabular}{llcc}
  \toprule
  Noise family & Functions & Win/Loss & Win rate \\
  \midrule
  Moderate Gaussian & f101--f109 & 40/95 & 29.6\% \\
  Severe Gaussian & f110--f118 & 87/48 & 64.4\% \\
  Severe Cauchy & f119--f124 & 55/35 & 61.1\% \\
  Severe + multimodal & f125--f130 & 61/29 & 67.8\% \\
  \bottomrule
  \end{tabular}
  \caption{\textbf{Noise-family breakdown for RB-PEM (log-weight).}
  Win/loss counts and win rates against CMA-ES at $B=100d$, grouped by the COCO \texttt{bbob-noisy} noise families.
  Gains concentrate in the severe Gaussian, severe Cauchy, and multimodal families, whereas the mostly stable moderate-Gaussian family is less favorable to always-on smoothing.
  \emph{Protocol:} $d=40$, 15 instances per function, one run per method.
  The Cauchy row is an empirical robustness check for the implemented winsorized residual pool, which clips standardized residuals to $[-M,M]$; it is not a claim that the raw Cauchy noise satisfies the finite-variance theory.}
  \label{tab:noise_type_breakdown}
\end{table}

\textbf{Practical takeaway}.
Across the appendix, the empirical message is consistent with the theoretical one: when ranking noise is substantial,
it is more sample-efficient (under a strict budget) to integrate uncertainty into selection weights (RB-PEM) than
to spend evaluations trying to eliminate it, and a low-cost rank-disagreement probe provides a robust mechanism for
avoiding overhead in low-misranking regimes.

\end{document}